\documentclass{ieeetj}

\usepackage{psfrag}
\usepackage{amsmath,amssymb,amsfonts,bbm,nicefrac,mathtools,lipsum}
\usepackage{latexsym}
\usepackage{graphicx}
\usepackage{empheq}
\usepackage{algorithm}
\usepackage{algpseudocode}
\usepackage{mdwlist}
\usepackage{balance}
\usepackage{colortbl}
\usepackage{epstopdf}
\usepackage{float}
\usepackage{hyperref}
\hypersetup{hidelinks=true}
\usepackage{booktabs,tabularx}
\usepackage{amsthm}
\usepackage{multirow}
\usepackage{overpic}
\usepackage{DejaVuSans}
\usepackage{dblfloatfix}
\usepackage{cite}
\usepackage{soul}
\usepackage{graphicx}

\def\BibTeX{{\rm B\kern-.05em{\sc i\kern-.025em b}\kern-.08em
    T\kern-.1667em\lower.7ex\hbox{E}\kern-.125emX}}
\AtBeginDocument{\definecolor{tmlcncolor}{cmyk}{0.93,0.59,0.15,0.02}\definecolor{NavyBlue}{RGB}{0,86,125}}

\def\OJlogo{\vspace{-4pt}$<$Society logo(s) and publication title will appear here.$>$}
\def\seclogo{\vspace{10pt}$<$Society logo(s) and publication title will appear here.$>$}

\def\authorrefmark#1{\ensuremath{^{\textbf{#1}}}}

\newtheorem{theorem}{Theorem}

\newcounter{problem}
\renewcommand{\theproblem}{P\arabic{problem}}

\newenvironment{problem}
  {\refstepcounter{problem}%
   \renewcommand{\theequation}{\theproblem}%
   \align}
  {\endalign}

\graphicspath{{./figures/}}

\begin{document}

\bstctlcite{BSTcontrol}
\receiveddate{XX Month, XXXX}
\reviseddate{XX Month, XXXX}
\accepteddate{XX Month, XXXX}
\publisheddate{XX Month, XXXX}
\currentdate{XX Month, XXXX}
\doiinfo{XXXX.2022.1234567}

\markboth{Resource-Efficient Distributed Recursive Gaussian Processes}{King et. al.}

\author{Josephine King \authorrefmark{1}, Student Member, IEEE \\ Ali Emre Balc\i \authorrefmark{1}, Student Member, IEEE \\ Raj Thilak Rajan \authorrefmark{1}, Senior Member, IEEE}
\affil{Signal Processing Systems, Faculty of EEMCS, Delft University of Technology, Mekelweg 4, 2628CD, Delft, The Netherlands}
\corresp{Corresponding author: Ali Emre Balc\i \  (a.e.balci@tudelft.nl)}
\authornote{This work is partially funded by the Sensor AI Lab through the AI Labs Program of the Delft University of Technology, and EU-HORIZON-KDT-JU-2023-2-RIA
  under grant agreement No 101139996, the ShapeFuture project.}

\title{Resource-Efficient Distributed Recursive Gaussian Processes}

\begin{abstract}
  Gaussian processes (GPs) provide a flexible framework for learning unknown functions from noisy measurements while quantifying predictive uncertainty, making them well suited for estimation in multi-agent systems. However, when measurements are collected by multiple agents, maintaining a unified GP model without centralized processing requires efficient distributed algorithms that can operate using local measurements and communication with neighboring agents. In this work, we develop two distributed recursive GP (RGP) algorithms for multi-output GP regression: ADMM-RGP and PDMM-RGP. We analyze the stability and convergence of both algorithms and develop parameter selection strategies to accelerate convergence, thus reducing the communication burden. The proposed methods are validated on a real-world multi-output wind dataset, and their convergence behavior is examined across communication graphs with varying connectivity. Numerical experiments demonstrate that ADMM-RGP and PDMM-RGP can significantly reduce communication relative to the state of the art, while maintaining comparable estimation accuracy and network-wide consensus.
\end{abstract}

\begin{IEEEkeywords}
  Gaussian processes, multi-agent systems, distributed inference, kernel methods, sensor fusion
\end{IEEEkeywords}

\maketitle

\section{Introduction}

\IEEEPARstart{M}{ulti} agent systems have gained significant attention in recent years across a variety of applications, including search-and-rescue \cite{darmanin_review_2017}, agricultural monitoring \cite{davoodi_graph_2020}, smart grids \cite{mahela_comprehensive_2022}, and logistics \cite{dorri_multi-agent_2018}. A key advantage of these systems is their ability to simultaneously collect data from multiple locations \cite{kontoudis_scalable_2024}. Exploiting these measurements requires agents to combine locally acquired information to estimate an underlying quantity of interest, such as an environmental field or an unknown system function. In many applications, measurements arrive sequentially and are spatially distributed across the network, motivating learning methods that can update their estimates online while operating with local computation and sparse communication \cite{ding_resource-efficient_2024, jakkala_multi-robot_2024}.

Gaussian processes (GPs) are a Bayesian supervised machine learning method for reconstructing unknown functions from noisy measurements \cite{rao_consensus-based_2026}. As a nonparametric method, GPs do not assume a fixed functional form, making them flexible and advantageous in applications where dynamics are unknown or unpredictable \cite{balci_multiple_2025}. GPs are particularly attractive when measurements are limited or noisy due to their ability to quantify predictive uncertainty and incorporate prior knowledge \cite{lederer_cooperative_2023}. These properties make GPs well suited for multi-agent systems, which often rely on sensing, estimation, and uncertainty quantification \cite{rao_consensus-based_2026}. GPs have proven useful in a wide variety of multi-agent tasks, including wireless traffic prediction \cite{xu_wireless_2019}, safe multi-agent path planning \cite{zhu_multi-agent_2020, jakkala_multi-robot_2024}, and cooperative mapping \cite{ding_resource-efficient_2024}.

Applying GPs online in multi-agent systems nevertheless presents computational and communication challenges. Centralized approaches require agents to transmit their measurements to a common processing unit, creating a single point of failure, while standard GP regression scales cubically with the number of observations \cite{kontoudis_scalable_2024}.
Communication constraints present an additional challenge, which is particularly important in unmanned aerial vehicles (UAVs), multi-robot systems (MRSs), and wireless sensor networks (WSNs), where communication can constitute a significant fraction of the available energy budget \cite{gupta_survey_2015, gielis_critical_2022, razzaque_energy-efficient_2014}.

Prior works on distributed GPs have largely focused on kernel hyperparameter training e.g., \cite{xie_distributed_2019, zhai_distributed_2023} propose proximal-ADMM approaches for distributed hyperparameter training, while \cite{kontoudis_scalable_2024} introduce dataset augmentation to promote hyperparameter consensus. Although these methods enable distributed GP training, they do not address the cubic computational complexity of GPs or support online regression in distributed settings. The recursive GP (RGP) algorithm \cite{huber_recursive_2014} was proposed to resolve the complexity problem by enabling GP regression from streaming data using a sparse, inducing point-based approximation and Kalman filter-like updates. For multi-agent networks, Consensus-RGP \cite{rao_consensus-based_2026} extended RGP to model and predict multi-output functions using distributed consensus algorithms over a network. In a similar setting, D-RF-GP \cite{llorente_decentralized_2025} adopts random Fourier features and uses ensembles of models to adapt to changing kernel length scales. The more recent ROAD-GP \cite{llorente_robust_2026} further extends D-RF-GP to time-varying functions and improved its robustness to outliers.

Despite this progress, the communication and consensus efficiency of distributed online GP methods remain comparatively unexplored. Motivated by this gap, we develop two communication-efficient distributed recursive GP algorithms for multi-output regression that we call ADMM-RGP and PDMM-RGP. ADMM-RGP adapts a communication-efficient alternating direction method of multipliers (ADMM) scheme originally developed for distributed Kalman filtering \cite{iqbal_communication-efficient_2026}, while PDMM-RGP applies the primal-dual method of multipliers (PDMM) algorithm \cite{zhang_distributed_2018} to the distributed RGP problem. Compared with the state-of-the-art Consensus-RGP \cite{rao_consensus-based_2026}, we show that ADMM-RGP significantly reduces communication while maintaining comparable accuracy and network-wide consensus, whereas PDMM-RGP further reduces communication and is particularly effective in sparse communication networks. The main contributions of this work are as follows:
\begin{itemize}
  \item We develop ADMM-RGP and PDMM-RGP for distributed, recursive, multi-output GP regression.
  \item We analyze the stability of both algorithms and provide parameter selection strategies for fast convergence. With the proposed parameter selection, ADMM-RGP is shown to converge faster than the state-of-the-art.
  \item We evaluate the proposed methods on a real-world, multi-output wind dataset \cite{copernicus_climate_change_service_era5_2024} to demonstrate their effectiveness in learning an unknown function.
  \item We compare the communication and computational complexity of the proposed methods with the state of the art across graphs of varying connectivity.
\end{itemize} In this study, our key focus lies in the distributed fusion layer rather than in the local GP approximation. Consequently, ADMM-RGP and PDMM-RGP are not tied to a specific basis construction or kernel family. Building on the inducing point formulation of Consensus-RGP \cite{rao_consensus-based_2026}, we isolate the communication efficiency aspect of distributed inference, and thus enable a controlled comparison with the state-of-the-art methods. The proposed framework can be readily extended to alternative GP representations such as random Fourier features. Moreover, the inducing point formulation adopted here is kernel agnostic, and thus can accommodate a broader class of covariance functions.

\textit{Notation:} $\mathbf{I}_N \in \mathbb{R}^{N \times N}$ denotes the identity matrix, and $\mathbf{1}_N \in \mathbb{R}^N$ denotes a vector of ones. The $i$th entry of a vector $\mathbf{x}$ is denoted by $[\mathbf{x}]_i$, and the $(i,j)$th entry of a matrix $\mathbf{X}$ is denoted by $[\mathbf{X}]_{ij}$. Probability density functions are denoted by $p(\cdot)$, and the multivariate normal distribution with mean $\boldsymbol{\mu}$ and covariance $\boldsymbol{\Sigma}$ is denoted by $\mathcal{N}(\boldsymbol{\mu}, \boldsymbol{\Sigma})$. The Kronecker product is denoted by $\otimes$. The set of eigenvalues of a matrix $\mathbf{X}$ is denoted by $\lambda(\mathbf{X})$, while $\lambda_i$ denotes its $i$th smallest eigenvalue, such that $\lambda_1 \leq \cdots \leq \lambda_N$. Finally, $\|\cdot\|$ denotes the Euclidean norm for vectors and the spectral norm for matrices, while $\|\cdot\|_F$ denotes the Frobenius norm.

\section{Preliminaries} \label{sec:preliminaries}
\subsection{Multi-Output Gaussian Processes}

Consider a function $\mathbf{f}: \mathbb{R}^D \rightarrow \mathbb{R}^{D'}$that may be observed with some noise 
\begin{equation}
    \mathbf{y}_j = \mathbf{f}(\mathbf{x}_j)+ \mathbf{v}_j, \quad \mathbf{v}_j \sim \mathcal{N}(\mathbf{0}, \mathbf{R}_v) \label{eq:mo_measurement_model},
\end{equation}
where $\mathbf{y}_j$ is the noisy measurement at input location $\mathbf{x}_j$ and $\mathbf{R}_v$ is the measurement noise covariance matrix. With a collection of measurements we form a dataset $\mathcal{D} \triangleq \{ \mathbf{X}, \mathbf{y} \} $, where $\mathbf{X} \triangleq [\mathbf{x}_1 \ \cdots \ \mathbf{x}_J ]\in \mathbb{R}^{D\times J}$ is the matrix of measurement locations and $\mathbf{y} \triangleq [\mathbf{y}_1^\top \ \cdots \ \mathbf{y}_J^\top]^\top \in \mathbb{R}^{D'J}$ is the corresponding vector of measurements.

In particular, for $D' = 1$, the unknown function may be modeled using a single-output Gaussian process denoted as $\mathcal{GP}(m(\mathbf{x}), \kappa(\mathbf{x},\mathbf{x}'))$ which  is characterized by a mean function, $m(\mathbf{x})$, and a kernel function, $\kappa(\mathbf{x}, \mathbf{x}'):\mathbb{R}^D \times \mathbb{R}^D \rightarrow \mathbb{R}$, where the kernel captures the smoothness and range of values that the function is expected to exhibit. The kernel function can be used to construct a kernel matrix $[\mathbf{K}(\mathbf{X}, \mathbf{X}')]_{ij} \triangleq\kappa(\mathbf{x}_{i}, \mathbf{x}_{j}')$ for two sets of input locations ($\mathbf{X},\mathbf{X}')$.
The predicted distribution $\mathbf{f}(\mathbf{X}_*) \sim \mathcal{N}(\boldsymbol{\mu}_*, \boldsymbol{\Sigma}_*)$ at a set of input locations $\mathbf{X}_*$ is then 
\begin{subequations}
\begin{align}
    \boldsymbol{\mu}_* &= \mathbf{K}(\mathbf{X}_*,\mathbf{X})\mathbf{K}_\mathbf{y}^{-1}\mathbf{y},  \label{eq:pred1}\\ 
    \boldsymbol{\Sigma}_* &= \mathbf{K}(\mathbf{X}_*,\mathbf{X}_*) - \mathbf{K}(\mathbf{X}_*,\mathbf{X})\mathbf{K}_\mathbf{y}^{-1}\mathbf{K}(\mathbf{X},\mathbf{X}_*),\label{eq:pred3}
\end{align}
\end{subequations}
where $\mathbf{K}_\mathbf{y} \triangleq \mathbf{K}(\mathbf{X},\mathbf{X})+\mathbf{I}_J\otimes\mathbf{R}_v$. For multi-output functions in general, the linear model of coregionalization (LMC) may be used to capture correlations between different output dimensions \cite{alvarez_kernels_2012}. In the LMC approach, each output dimension is represented as a linear combination of $Q \le D'$ latent single-output functions, $\{u_q : \mathbb{R}^D \rightarrow \mathbb{R} \}_{q=1}^Q$. Each latent function has a zero-mean GP prior, $u_q(\mathbf{x}) \sim \mathcal{GP} (0, \kappa_q(\mathbf{x}, \mathbf{x}'))$, and a vector of weights $\mathbf{b}_q \triangleq [b_{1q} \ \cdots \ b_{D'q}]^\top$, where $\mathbf{b}_{iq}$ is associated with output dimension $i$. The covariance of the outputs at two input locations is given by the matrix-valued kernel function,
$\boldsymbol{\mathcal{K}}(\mathbf{x}, \mathbf{x}')\triangleq\text{cov}(\mathbf{f}(\mathbf{x}), \mathbf{f}(\mathbf{x}'))= \sum_{q=1}^Q \kappa_q(\mathbf{x}, \mathbf{x}') \mathbf{B}_q$, where $\mathbf{B}_q \triangleq \mathbf{b}_q \mathbf{b}_q^\top$ is the coregionalization matrix for latent function $q$. The LMC block kernel matrix for two sets of input locations ($\mathbf{X},\mathbf{X}')$ is thus defined as
$[\mathbf{K}(\mathbf{X}, \mathbf{X}')]_{ij} \triangleq
        \boldsymbol{\mathcal{K}}(\mathbf{x}_{i}, \mathbf{x}_{j}').$

\subsection{Recursive Gaussian Processes}

The recursive Gaussian process (RGP) algorithm provides an online estimate of latent function values $\mathbf{z} \triangleq \mathbf{f}(\mathbf{X}_p)$ at a fixed set of basis locations $\mathbf{X}_p \in \mathbb{R}^{D \times P}$ \cite{huber_recursive_2014}. At each time step $t$, a batch of $B_t<J$ noisy measurements $\{\mathbf{X}_t,\mathbf{y}_t\}$ is collected according to the measurement model in \eqref{eq:mo_measurement_model} and, using this batch and the latent prior $\mathbf{z} \mid \mathbf{y}_{1:t-1} \sim \mathcal{N}(\boldsymbol{\mu}_{t-1},\mathbf{\Sigma}_{t-1}),$ RGP obtains the posterior $\mathbf{z} \mid \mathbf{y}_{1:t} \sim \mathcal{N}(\boldsymbol{\mu}_t,\mathbf{\Sigma}_t)$, where $ \mathbf{y}_{1:t} \triangleq \{\mathbf{y}_i\}_{i=1}^{t}$. Given the prior distribution over the basis function values, the predictive distribution at the new measurement locations $\mathbf{f}_t \triangleq \mathbf{f}(\mathbf{X}_t) \sim \mathcal{N}(\boldsymbol{\mu}_{\mathbf{f}_t}, \mathbf{\Sigma}_{\mathbf{f}_t})$ is a Gaussian with
\begin{subequations}
\begin{align}
    \boldsymbol{\mu}_{\mathbf{f}_t}
    &= \mathbf{H}_t \boldsymbol{\mu}_{t-1}, \\
    \boldsymbol{\Sigma}_{\mathbf{f}_t}
    &= \mathbf{K}(\mathbf{X}_t,\mathbf{X}_t)
    - \mathbf{H}_t \mathbf{K}(\mathbf{X}_p,\mathbf{X}_t)
    + \mathbf{H}_t \mathbf{\Sigma}_{t-1} \mathbf{H}_t^\top, \\
    \mathbf{H}_t &\triangleq \mathbf{K}(\mathbf{X}_t,\mathbf{X}_p)\mathbf{K}_{\mathbf{X}_p}^{-1}, \label{eq:rgp_H_calculation} \\
    \mathbf{K}_{\mathbf{X}_p} &\triangleq \mathbf{K}(\mathbf{X}_p,\mathbf{X}_p) + \epsilon \mathbf{I},
\end{align}
\end{subequations}
where $\epsilon > 0$ is a small constant to ensure stability of the inversion. The posterior mean and covariance can be calculated using a Kalman-like correction update as
\begin{subequations}
\begin{align}
    \boldsymbol{\mu}_t &= \boldsymbol{\mu}_{t-1} + \mathbf{G}_t (\mathbf{y}_t-\mathbf{H}_t\boldsymbol{\mu}_{t-1}), \label{eq:rgp_mu_update} \\ 
    \mathbf{\Sigma}_t &= (\mathbf{I} - \mathbf{G}_t\mathbf{H}_t)\mathbf{\Sigma}_{t-1}, \label{eq:rgp_sigma_update} \\
    \mathbf{G}_t &\triangleq \mathbf{\Sigma}_{t-1}\mathbf{H}_t^\top  (\mathbf{R}_t+\mathbf{H}_t \mathbf{\Sigma}_{t-1} \mathbf{H}_t^{\top} )^{-1}, \label{eq:rgp_gain} \\
    \mathbf{R}_t &\triangleq \mathbf{K}(\mathbf{X}_t, \mathbf{X}_t) - \mathbf{H}_t \mathbf{K}(\mathbf{X}_p,\mathbf{X}_t) + \mathbf{I}_{B_t}\otimes\mathbf{R}_v \label{eq:rgp_R_calculation} .
\end{align}
\end{subequations}

\subsection{Distributed Recursive Gaussian Processes}

The distributed RGP problem considers a multi-agent network of $N \ge 2$ agents, whose communication topology is represented by a connected graph $\mathcal{G} = (\mathcal{V}, \mathcal{E})$, with vertices $\mathcal{V} = \{1, \cdots, N \}$ representing the agents and edges $\mathcal{E} \subseteq \mathcal{V} \times \mathcal{V}$ representing communication links. The set $\mathcal{N}_n \triangleq \{ m \in \mathcal{V} : (n,m) \in \mathcal{E} \}$ represents the neighbors of node $n$. At each time step $t$, each agent $n$ collects its own batch of $B_{n,t}$ independent noisy measurements $\{\mathbf{X}_{n,t},\mathbf{y}_{n,t}\}$ according to \eqref{eq:mo_measurement_model}. A set of $P$ basis locations, $\mathbf{X}_p$, are fixed and known to each agent.

The objective is for each agent to obtain the RGP estimate that would result from jointly processing all measurements collected across the network. Following the RGP formulation, the Gaussian distribution can be equivalently represented using the global information vector $\boldsymbol{\xi}_{g,t}$ and global information matrix $\mathbf{\Omega}_{g,t}$, given by 
\begin{align}
    \boldsymbol{\xi}_{g,t} &= \boldsymbol{\Sigma}_t^{-1}\boldsymbol{\mu}_t, \quad \mathbf{\Omega}_{g,t} = \boldsymbol{\Sigma}_t^{-1}.
\end{align}
Assuming the measurements at different agents are independent, the information-form global RGP updates are
\begin{align}
    \boldsymbol{\xi}_{g,t} &= \boldsymbol{\xi}_{g,t-1} + \sum_{n=1}^N \mathbf{H}_{n,t}^\top\mathbf{R}_{n,t}^{-1}\mathbf{y}_{n,t}, \label{eq:global_info_vector} \\
    \mathbf{\Omega}_{g,t} &= \mathbf{\Omega}_{g,t-1} + \sum_{n=1}^N \mathbf{H}_{n,t}^\top \mathbf{R}_{n,t}^{-1}\mathbf{H}_{n,t}, \label{eq:global_info_matrix}
\end{align}
where $\mathbf{H}_{n,t}$ and $\mathbf{R}_{n,t}$ are defined by \eqref{eq:rgp_H_calculation} and \eqref{eq:rgp_R_calculation} with the measurement batch collected by agent $n$. Although separable, \eqref{eq:global_info_vector}-\eqref{eq:global_info_matrix} require measurement contributions from all agents, which is not feasible in a distributed system with no central processing station. Each agent $n$ therefore maintains a local information vector $\boldsymbol{\xi}_{n,t}$ and an information matrix $\mathbf{\Omega}_{n,t}$ and exchanges information with neighboring agents to approximate the global solution.

\subsection{State of the Art: Consensus-RGP}

The Consensus-RGP algorithm extends the RGP framework to model multi-output functions in distributed, multi-agent systems \cite{rao_consensus-based_2026}. At each time step $t$, each agent $n$ applies updates \eqref{eq:rgp_mu_update} and \eqref{eq:rgp_sigma_update} in the information form with their local set of measurements
\begin{subequations}\label{eq:rgp_info_form_start}
    \begin{align}
        \boldsymbol{\xi}_{n,t} & = \boldsymbol{\xi}_{n,t-1} + \mathbf{H}_{n,t}^\top\mathbf{R}_{n,t}^{-1}\mathbf{y}_{n,t}, \\
        \mathbf{\Omega}_{n,t}  & = \mathbf{\Omega}_{n,t-1} + \mathbf{H}_{n,t}^\top \mathbf{R}_{n,t}^{-1}\mathbf{H}_{n,t},
    \end{align}
\end{subequations}
where
$\mathbf{H}_{n,t} \triangleq\mathbf{K}(\mathbf{X}_{n,t},\mathbf{X}_{p})\mathbf{K}_{\mathbf{X}_p}^{-1}$ and
$\mathbf{R}_{n,t}  \triangleq \mathbf{K}(\mathbf{X}_{n,t}, \mathbf{X}_{n,t})-\mathbf{H}_{n,t}\mathbf{K}(\mathbf{X}_p, \mathbf{X}_{n,t})+\mathbf{I}_{B_{n,t}} \otimes \mathbf{R}_v$. After the local updates, the agents perform $K$ rounds of distributed average consensus. In the $k^{th}$ round, agent $n$ updates its information parameters as
\begin{align}
    \boldsymbol{\xi}_{n,t}^{[k]} = \sum_{m=1}^Nw_{nm}\boldsymbol{\xi}_{m,t}^{[k-1]}, \quad
    \boldsymbol{\Omega}_{n,t}^{[k]} = \sum_{m=1}^Nw_{nm}\boldsymbol{\Omega}_{m,t}^{[k-1]},
\end{align}
where $w_{nm}$ are the row-stochastic consensus weights. After $K$ rounds of average consensus, the global information form parameters $\boldsymbol{\xi}_{g,t}$ and $\boldsymbol{\Omega}_{g,t}$ can be approximated as
\begin{subequations}\label{eq:rgp_info_form_end}
    \begin{align}
        \boldsymbol{\xi}_{g,t} & = N\boldsymbol{\xi}_{n,t}^{[K]},                                                               \\
        \mathbf{\Omega}_{g,t}  & =\mathbf{\Omega}_{n,0}^{[K]} + N(\boldsymbol{\Omega}_{n,t}^{[K]} - \boldsymbol{\Omega}_{n,0}).
    \end{align}
\end{subequations}

\section{Proposed Approaches} \label{sec:proposed_approaches}
\subsection{ADMM-RGP} \label{sec:admm_framework}

Inspired by a communication-efficient distributed ADMM variant of the Distributed Kalman Filter (DKF) \cite{iqbal_communication-efficient_2026}, we propose a novel distributed ADMM-RGP algorithm that presents the updates directly in information form, simplifying the convergence analysis. Let $\mathbf{L} \succeq \mathbf{0}$ be the Laplacian matrix of the communication graph. To derive distributed information vector updates using the ADMM-based approach of \cite{iqbal_communication-efficient_2026}, we formulate the separable average consensus optimization problem 
\begin{problem}
&\underset{\boldsymbol{\xi}_{t}}{\text{minimize}} \ \ \ \frac{1}{2}\sum_{n=1}^N \| \boldsymbol{\chi}_{n,t} - \boldsymbol{\xi}_{n,t} \|^2, \label{prob:info_vector_distributed} \\ &\text{subject to } \ \ \mathbf{L}_\xi \boldsymbol{\xi}_t = \mathbf{0}, \nonumber
\end{problem}
where $\boldsymbol{\xi}_t \triangleq[ \boldsymbol{\xi}_{1,t}^\top \ \cdots \ \boldsymbol{\xi}_{N,t}^\top]^\top$, $\boldsymbol{\chi}_{n,t} \triangleq N\mathbf{H}_{n,t}^\top \mathbf{R}_{n,t}^{-1}\mathbf{y}_{n,t} + \boldsymbol{\xi}_{n,t-1}$, $\mathbf{L}_\xi \triangleq (\mathbf{L} \otimes \mathbf{I}_{D_\xi})$, and $D_\xi \triangleq PD'$. The equality constraint ensures consensus among all agents, as the nullspace of $\mathbf{L}$ is the consensus subspace, spanned by $\mathbf{1}_N$. This problem has solution 
\begin{equation}
    \boldsymbol{\xi}_{t} ^*= \mathbf{1}_N\otimes\Big(\sum_{n=1}^N \mathbf{H}_{n,t}^\top \mathbf{R}_{n,t}^{-1}\mathbf{y}_{n,t}+ \frac{1}{N}\sum_{n=1}^N\boldsymbol{\xi}_{n,t-1}\Big). 
\end{equation}
If the agents reach consensus on the information vector at each time step, then $\boldsymbol{\xi}_{n,t} = \boldsymbol{\xi}_{g,t}$ for all $n \in \mathcal{V}$, as desired. To solve \ref{prob:info_vector_distributed}, we define the dual variable at node $n$, time $t$ as $\boldsymbol{\lambda}_{n,t} \in \mathbb{R}^{D_\xi}$ and let $\boldsymbol{\lambda}_t \triangleq [
    \boldsymbol{\lambda}_{1,t}^\top \ \cdots \ \boldsymbol{\lambda}_{N,t}^\top
]^\top$. The augmented Lagrangian with penalty parameter $\tau$ is then 
\begin{align}
\mathcal{L}_\tau(\boldsymbol{\xi}_t, \boldsymbol{\lambda}_t) = \frac{1}{2} \| \boldsymbol{\chi}_t - \boldsymbol{\xi}_{t} \|_2^2 + \boldsymbol{\lambda}_t^\top \mathbf{B}_\xi \boldsymbol{\xi}_t+\frac{\tau }{2} ||\mathbf{B}_\xi\boldsymbol{\xi}_t||_2^2, \label{eq:lagrangian1}
\end{align}
where $\mathbf{B}_\xi$ is the symmetric matrix square root of $\mathbf{L}_\xi$ and $\boldsymbol{\chi}_{t} \triangleq [\boldsymbol{\chi}_{1,t}^\top \ \cdots \ \boldsymbol{\chi}_{N,t}^\top]^\top$.
Differentiating \eqref{eq:lagrangian1} with respect to $\boldsymbol{\xi}_t$ and $\boldsymbol{\lambda}_t$ gives the following gradients.
\begin{subequations}
\begin{align}
    \nabla_{\boldsymbol{\xi}_t} \mathcal{L}_\tau &= \boldsymbol{\xi}_t -\boldsymbol{\chi}_t +
    \mathbf{B}_\xi\boldsymbol{\lambda}_t + \tau\mathbf{L}_\xi\boldsymbol{\xi}_t, \\
    \nabla_{\boldsymbol{\lambda}_t} \mathcal{L}_\tau &= \mathbf{B}_\xi\boldsymbol{\xi}_t.
\end{align}
\end{subequations}
Defining $\alpha >0$ as the step size of the dual variable ascent, the primal and dual variable updates at loop iteration $k$ are
\begin{subequations}
\begin{align}
    \tilde{\boldsymbol{\lambda}}_{t}^{[k]} &= \tilde{\boldsymbol{\lambda}}_{t}^{[k-1]} + \alpha\mathbf{L}_\xi\boldsymbol{\xi}_{t}^{[k-1]}, \\
    \boldsymbol{\xi}_{t}^{[k]}&=\boldsymbol{\chi}_t - \tilde{\boldsymbol{\lambda}}_{t}^{[k]}-\tau\mathbf{L}_\xi\boldsymbol{\xi}_{t}^{[k-1]}, \label{eq:mu_iterate}
\end{align}
\end{subequations}
where $\tilde{\boldsymbol{\lambda}}_t \triangleq\mathbf{B}_\xi\boldsymbol{\lambda}_t$ is an auxiliary dual variable. Let $l_{nm}\triangleq[\mathbf{L}]_{nm}$, then the local update at node $n$ is therefore
\begin{subequations}
\begin{align}
    \tilde{\boldsymbol{\lambda}}_{n,t}^{[k]}  &= \tilde{\boldsymbol{\lambda}}_{n,t}^{[k-1]}  + \alpha \sum_{m=1}^N l_{nm}\boldsymbol{\xi}_{m,t}^{[k-1]},  \label{eq:lambda_final}\\
    \boldsymbol{\xi}_{n,t}^{[k]}  &= \boldsymbol{\chi}_{n,t} - \tilde{\boldsymbol{\lambda}}_{n,t}^{[k]}  -\tau \sum_{m=1}^N l_{nm}\boldsymbol{\xi}_{m,t}^{[k-1]}.\label{eq:mu_final}
\end{align}
\end{subequations}

Now, to derive the distributed ADMM-based information matrix updates, we define the half-vectorization of the information matrix, $\boldsymbol{\omega} \triangleq \text{vech}(\boldsymbol{\Omega})$, which has dimension $D_\omega \triangleq \tfrac{1}{2}D_\xi(D_\xi+1)$. The distributed optimization problem is then
\begin{problem}
&\underset{\boldsymbol{\omega}_{t}}{\text{minimize}} \ \ \ \frac{1}{2}\sum_{n=1}^N \| \boldsymbol{\phi}_{n,t} -\boldsymbol{\omega}_{n,t} \|^2, \label{prob:covariance} \\ &\text{subject to } \ \ \mathbf{L}_\omega \boldsymbol{\omega}_t = \mathbf{0}, \nonumber 
\end{problem}
where $\boldsymbol{\omega}_t \triangleq [\boldsymbol{\omega}_{1,t}^\top \ \cdots \ \boldsymbol{\omega}_{N,t}^\top]^\top$, $\boldsymbol{\phi}_{n,t} \triangleq \text{vech}(N\mathbf{H}_{n,t}^\top \mathbf{R}_{n,t}^{-1} \mathbf{H}_{n,t}) + \boldsymbol{\omega}_{n,t-1}$, and $\mathbf{L}_\omega \triangleq \mathbf{L}\otimes\mathbf{I}_{D_\omega}$. Following a similar procedure as for the information vector, the information matrix updates are
\begin{subequations}
\begin{align}
    \tilde{\boldsymbol{\nu}}_t^{[k]} &= \tilde{\boldsymbol{\nu}}_t^{[k-1]} + \alpha\mathbf{L}_\omega\boldsymbol{\omega}_t^{[k-1]}, \\ 
    \boldsymbol{\omega}_t^{[k]} &=\boldsymbol{\phi}_t-\tilde{\boldsymbol{\nu}}_t^{[k]} - \tau \mathbf{L}_\omega \boldsymbol{\omega}_t^{[k-1]}, \label{eq:omega_iterate}
\end{align}
\end{subequations}
where $\tilde{\boldsymbol{\nu}}_{t} \in\mathbb{R}^{ND_\omega}$ is the global auxiliary dual variable. The fully distributed updates at node $n$ are
\begin{subequations}
\begin{align}
    \tilde{\boldsymbol{\nu}}_{n,t}^{[k]} &= \tilde{\boldsymbol{\nu}}_{n,t}^{[k-1]} + \alpha
    \sum_{m=1}^N l_{nm}\boldsymbol{\omega}_{m,t}^{[k-1]}, \label{eq:nu_final} \\
    \boldsymbol{\omega}_{n,t}^{[k]} &= \boldsymbol{\phi}_{n,t} - \tilde{\boldsymbol{\nu}}_{n,t}^{[k]}  -\tau \sum_{m=1}^N l_{nm}\boldsymbol{\omega}_{m,t}^{[k-1]} ,\label{eq:omega_final}
\end{align}
\end{subequations}
where $\tilde{\boldsymbol{\nu}}_{n,t} \in\mathbb{R}^{D_\omega}$ is the auxiliary dual variable at node $n$. After $K$ iterations of ADMM, the estimates of the mean $\boldsymbol{\mu}_{n,t}$ and covariance $\boldsymbol{\Sigma}_{n,t}$ can be obtained at each agent $n$ by computing

\begin{align} \label{eq:admm_final_outputs}
    \boldsymbol{\mu}_{n,t} = \mathbf{\Sigma}_{n,t}\boldsymbol{\xi}_{n,t}^{[K]}, \quad
    \mathbf{\Sigma}_{n,t} = \mathbf{\Omega}_{n,t}^{-1},
\end{align} where $\mathbf{\Omega}_{n,t} = \text{vech}^{-1}(\boldsymbol{\omega}_{n,t}^{[K]})$. The ADMM-RGP algorithm is summarized in Algorithm \ref{alg:admm_alg}.

\algnewcommand{\LineComment}[1]{\State \(\triangleright\)~#1}

\begin{algorithm}[t]
\caption{ADMM-RGP} 
\label{alg:admm_alg}
\begin{algorithmic}[1]
\State \textbf{Inputs:} $\mathbf{L}$, $\boldsymbol{\mathcal{K}}$, $\alpha$, $\tau$
\State $\boldsymbol{\omega}_{n,0} \gets \text{vech}((\mathbf{K}(\mathbf{X}_p, \mathbf{X}_p) + \epsilon \mathbf{I})^{-1})$ $\forall n \in \mathcal{V}$
\State $\boldsymbol{\xi}_{n,0} \gets \mathbf{0}$ $\forall n \in \mathcal{V}$

\For{$t = 1$ \textbf{to} $T$} 

\For{$n \in \mathcal{V}$} 
\State Get new batch of measurements $\{\mathbf{X}_{n,t}, \mathbf{y}_{n,t}\}$
\State Calculate $\boldsymbol{\chi}_{n,t}$ and $\boldsymbol{\phi}_{n,t}$
\State $\boldsymbol{\xi}_{n,t}^{[0]} \gets \boldsymbol{\chi}_{n,t}, \ \boldsymbol{\omega}_{n,t}^{[0]} \gets \boldsymbol{\phi}_{n,t}$
\State $\tilde{\boldsymbol{\lambda}}_{n,t}^{[0]} \gets\mathbf{0}, \tilde{\boldsymbol{\nu}}_{n,t}^{[0]} \gets\mathbf{0}$
\EndFor

\For{$k = 1$ \textbf{to} $K$} 
\For{$n \in \mathcal{V}$} 
    \State Share $\boldsymbol{\xi}_{n,t}^{[k-1]}, \boldsymbol{\omega}_{n,t}^{[k-1]}$ with all $m\in\mathcal{N}_n$
    \State Receive $\boldsymbol{\xi}_{m,t}^{[k-1]}, \boldsymbol{\omega}_{m,t}^{[k-1]}$ from all $m\in\mathcal{N}_n$
    \State Update $\tilde{\boldsymbol{\lambda}}_{n,t}^{[k]}$ and  $\boldsymbol{\xi}_{n,t}^{[k]}$ following \eqref{eq:lambda_final}--\eqref{eq:mu_final}
    \State Update $\tilde{\boldsymbol{\nu}}_{n,t}^{[k]}$ and $\boldsymbol{\omega}_{n,t}^{[k]}$ following \eqref{eq:nu_final}--\eqref{eq:omega_final}
\EndFor
\EndFor

\State $\boldsymbol{\xi}_{n,t} \gets \boldsymbol{\xi}_{n,t}^{[K]}$, $\boldsymbol{\omega}_{n,t} \gets \boldsymbol{\omega}_{n,t}^{[K]}$  for $\forall n \in \mathcal{V}$
\State Calculate $\boldsymbol{\mu}_{n,t}$ and $\mathbf{\Sigma}_{n,t}$ $\forall n \in \mathcal{V}$ using \eqref{eq:admm_final_outputs}
\EndFor
\State \textbf{Outputs:} $\boldsymbol{\mu}_{n,T}$ and $\mathbf{\Sigma}_{n,T}$ $\forall n \in \mathcal{V}$
\end{algorithmic}
\end{algorithm}
\normalsize

\subsection{PDMM-RGP} \label{sec:pdmm_framework}

The optimization problems in the ADMM-RGP derivation are node-separable with strongly convex, differentiable objectives and convex constraints, making them well suited for the primal–dual method of multipliers (PDMM) framework \cite{sherson_derivation_2019}. PDMM has been empirically shown to converge faster than ADMM in certain settings, including average consensus problems \cite{zhang_distributed_2018}.%

To reformulate \ref{prob:info_vector_distributed} for the application of PDMM, we define two directed edges $(n \rightarrow m)$ and $(m \rightarrow n)$ for each $(n,m) \in \mathcal{E}$. Furthermore let $\mathbf{A}_{n\rightarrow m} \triangleq a_{n\rightarrow m}\mathbf{I}_{D_\xi}$, where
\begin{align}
    a_{n\rightarrow m} \triangleq \begin{cases}
        -[\mathbf{L}]_{nm} & \text{if } n<m, \\
        [\mathbf{L}]_{nm} & \text{if } n>m,
    \end{cases} \label{eq:anm}
\end{align} which leads to the following reformulated problem
\begin{problem}
&\underset{\boldsymbol{\xi}_{t}}{\text{minimize}} \ \ \ \frac{1}{2}\sum_{n=1}^N \| \boldsymbol{\chi}_{n,t} - \boldsymbol{\xi}_{n,t} \|^2, \label{prob:info_vector_pdmm} 
\\ 
&\text{subject to } \ \ \mathbf{A}_{n \rightarrow m}\boldsymbol{\xi}_{n,t} + \mathbf{A}_{m \rightarrow n} \boldsymbol{\xi}_{m,t}= \mathbf{0} \quad \forall (n,m)\in \mathcal{E}. \nonumber
\end{problem}

Now, we construct a lifted constraint matrix, by assigning each directed edge $(n \rightarrow m)$ to a unique index $i_{n \rightarrow m} \in \{1,\dots,2M\}$, where $M = |\mathcal{E}|$ denotes the number of undirected edges in the communication graph. Let the lifted constraint matrix be denoted by $\mathbf{C} \in \mathbb{R}^{2M \times N}$ then \begin{equation}
    [\mathbf{C}]_{rn} = \begin{cases} a_{n \rightarrow m} & \text{if } r = i_{n \rightarrow m}, \\ 0 & \text{otherwise}.\end{cases}
\end{equation} Additionally, consider a $2M \times 2M$ permutation matrix $\mathbf{P}$ that exchanges the components at indices $i_{n\rightarrow m}$ and $i_{m\rightarrow n}$ for each undirected edge $(n,m)\in\mathcal{E}$.  Lastly, let the auxiliary dual variables be denoted by $\boldsymbol{\upsilon}_{n|m,t} \in \mathbb{R}^{D_\xi}$ at each directed edge $(n \rightarrow m)$ of the communication graph, which are aggregated into $\boldsymbol{\upsilon}_t \in \mathbb{R}^{2MD_\xi}$ following the same edge order used to build $\mathbf{C}$ and $\mathbf{P}$. 

The PDMM updates \cite{heusdens_distributed_2026} to solve \ref{prob:info_vector_pdmm} are then given by 
\begin{subequations}
\begin{align}
    \boldsymbol{\xi}_t^{[k]} &=  \arg \underset{\boldsymbol{\xi}}{\min} (\| \boldsymbol{\chi}_t - \boldsymbol{\xi}\| ^2+ (\mathbf{P}_\xi\boldsymbol{\upsilon}_t^{[k-1]})^\top\mathbf{C}_\xi\boldsymbol{\xi}+\frac{c}{2}\|\mathbf{C}_\xi\boldsymbol{\xi}\|^2), \nonumber \\
    \boldsymbol{\upsilon}_t^{[k]} &= \mathbf{P}_\xi\boldsymbol{\upsilon}_t^{[k-1]} + 2c\mathbf{C}_\xi\boldsymbol{\xi}_t^{[k]}, \nonumber
\end{align}
\end{subequations}
where $c > 0$ is a constant parameter,  $\mathbf{C}_\xi \triangleq \mathbf{C} \otimes \mathbf{I}_{D_\xi}$, and
$\mathbf{P}_\xi \triangleq \mathbf{P}\otimes\mathbf{I}_{D_\xi}$. To obtain the $\boldsymbol{\xi}_t$ update, take the gradient with respect to the primal variable and set it to zero. Because $\mathbf{C}^\top\mathbf{C}$ is a diagonal matrix with entries that sum the squared weights corresponding to each node and $c > 0$, $(\mathbf{I} +c\mathbf{C}_\xi^\top \mathbf{C}_\xi)$ is invertible, leading us to the global PDMM iterates for the information vector
\begin{subequations}
\begin{align}
    \boldsymbol{\xi}_{t}^{[k]} &= (\mathbf{I} + c \mathbf{C_\xi^\top C_\xi})^{-1}(\boldsymbol{\chi}_t - \mathbf{C}_\xi^\top \mathbf{P}_\xi\boldsymbol{\upsilon}_t^{[k-1]}), \\
    \boldsymbol{\upsilon}_{t}^{[k]} &= \mathbf{P}_\xi\boldsymbol{\upsilon}_{t}^{[k-1]} + 2c\mathbf{C}_\xi \boldsymbol{\xi}_{t}^{[k]}.
\end{align}
\end{subequations}
Subsequently, the distributed iterates at each agent $n$ are
\begin{subequations}
\begin{align}
    \boldsymbol{\xi}_{n,t}^{[k]} &=\frac{\boldsymbol{\chi}_{n,t} -\sum_{m\in\mathcal{N}_n} a_{n \rightarrow m} \boldsymbol{\upsilon}_{m|n,t}^{[k-1]}} {1 + c\sum_{m\in\mathcal{N}_n} a_{n \rightarrow m}^2}, \label{eq:xi_pdmm_update} \\
    \boldsymbol{\upsilon}_{n|m,t}^{[k]} &= \boldsymbol{\upsilon}_{m|n,t}^{[k-1]} + 2ca_{n \rightarrow m} \boldsymbol{\xi}_{n,t}^{[k]}. \label{eq:upsilon_pdmm_update} 
\end{align}
\end{subequations}

Note that the agent $n$ requires the value of $\boldsymbol{\upsilon}_{m|n,t}$ from agent $m$. Rather than transmitting these values individually over each edge using unicast communication, agent $m$ can broadcast its primal variable $\boldsymbol{\xi}_{m,t}$ to all of its neighbors, allowing each neighbor to locally reconstruct the value of $\boldsymbol{\upsilon}_{m|n,t}$. This broadcast strategy reduces the communication overhead.Along similar lines, \ref{prob:covariance} can be solved with PDMM by defining auxiliary dual variables $\boldsymbol{\psi}_{n|m,t} \in \mathbb{R}^{D_\omega}$ for each directed edge $(n \rightarrow m)$ in the communication graph. The local updates at node $n$ are
\begin{subequations}
\begin{align}
    \boldsymbol{\omega}_{n,t}^{[k]} &=\frac{\boldsymbol{\phi}_{n,t} -\sum_{m\in\mathcal{N}_n} a_{n \rightarrow m} \boldsymbol{\psi}_{m|n,t}^{[k-1]}} {1 + c\sum_{m\in\mathcal{N}_n} a_{n\rightarrow m}^2}, \label{eq:omega_pdmm_update} \\
    \boldsymbol{\psi}_{n|m,t}^{[k]} &= \boldsymbol{\psi}_{m|n,t}^{[k-1]} + 2ca_{n\rightarrow m} \boldsymbol{\omega}_{n,t}^{[k]}.
    \label{eq:psi_pdmm_update} 
\end{align}
\end{subequations}
The PDMM-RGP algorithm using a broadcast communication protocol is summarized in Algorithm \ref{alg:pdmm_alg}.

\begin{algorithm}[t]
\caption{PDMM-RGP} 
\label{alg:pdmm_alg}
\begin{algorithmic}[1]

\State \textbf{Inputs:} $\mathbf{L}$, $\boldsymbol{\mathcal{K}}$, $c$
\State $\boldsymbol{\omega}_{n,0} \gets \text{vech}((\mathbf{K}(\mathbf{X}_p, \mathbf{X}_p) + \epsilon \mathbf{I})^{-1})$, $\forall n \in \mathcal{V}$
\State $\boldsymbol{\xi}_{n,0} \gets \mathbf{0}$  $\forall n \in \mathcal{V}$

\For{$t = 1$ \textbf{to} $T$} 

\For{$n \in \mathcal{V}$} 
\State Get new batch of measurements $(\mathbf{X}_{n,t}, \mathbf{y}_{n,t})$
\State Calculate $\boldsymbol{\chi}_{n,t}$ and $\boldsymbol{\phi}_{n,t}$
\State $\boldsymbol{\xi}_{n,t}^{[0]} \gets 
\boldsymbol{\chi}_{n,t}, \ \boldsymbol{\omega}_{n,t}^{[0]} \gets \boldsymbol{\phi}_{n,t}$
\State $\boldsymbol{\upsilon}_{n|m,t}^{[0]} \gets\mathbf{0}, \boldsymbol{\psi}_{n|m,t}^{[0]} \gets\mathbf{0}$ for $m\in \mathcal{N}_n$ (at $n$)
\State $\boldsymbol{\upsilon}_{m|n,t}^{[0]} \gets\mathbf{0}, \boldsymbol{\psi}_{m|n,t}^{[0]} \gets\mathbf{0}$ for $m\in \mathcal{N}_n$ (at $n$)
\EndFor

\For{$k = 1$ \textbf{to} $K$} 
\For{$n \in \mathcal{V}$} 

    \State Update $\boldsymbol{\xi}_{n,t}^{[k]}$ and  $\boldsymbol{\upsilon}_{n|m,t}^{[k]}$ using \eqref{eq:xi_pdmm_update}-\eqref{eq:upsilon_pdmm_update}
    \State Update $\boldsymbol{\omega}_{n,t}^{[k]}$ and  $\boldsymbol{\psi}_{n|m,t}^{[k]}$  using \eqref{eq:omega_pdmm_update}-\eqref{eq:psi_pdmm_update}
    
    \State Share $\boldsymbol{\xi}_{n,t}^{[k]}, \boldsymbol{\omega}_{n,t}^{[k]}$ with all $m\in\mathcal{N}_n$
    \State Receive $\boldsymbol{\xi}_{m,t}^{[k]}, \boldsymbol{\omega}_{m,t}^{[k]}$ from all $m\in\mathcal{N}_n$

    \State $\boldsymbol{\upsilon}^{[k]}_{m|n,t} \gets \boldsymbol{\upsilon}_{n|m,t}^{[k-1]} + 2ca_{m \rightarrow n}\boldsymbol{\xi}_{m,t}^{[k]}$ (at $n$)
    \State $\boldsymbol{\psi}^{[k]}_{m|n,t} \gets \boldsymbol{\psi}_{n|m,t}^{[k-1]} + 2ca_{m \rightarrow n}\boldsymbol{\omega}_{m,t}^{[k]}$ (at $n$)
    
\EndFor
\EndFor

\State $\boldsymbol{\xi}_{n,t} \gets \boldsymbol{\xi}_{n,t}^{[K]}$, $\boldsymbol{\omega}_{n,t} \gets \boldsymbol{\omega}_{n,t}^{[K]}$  $\forall n \in \mathcal{V}$

\State Calculate $\boldsymbol{\mu}_{n,t}$ and $\mathbf{\Sigma}_{n,t}$ $\forall n \in \mathcal{V}$ using \eqref{eq:admm_final_outputs}
\EndFor

\State \textbf{Outputs:} $\boldsymbol{\mu}_{n,T}$ and $\mathbf{\Sigma}_{n,T}$ $\forall n \in \mathcal{V}$

\end{algorithmic}
\end{algorithm}
\normalsize

\section{Analysis}

In this section, we establish the convergence properties of the proposed ADMM-RGP and PDMM-RGP algorithms.

\subsection{ADMM-RGP Convergence}
We begin with the analysis in \cite{iqbal_communication-efficient_2026}. Recall the ADMM-RGP iterates from \eqref{eq:mu_iterate} and \eqref{eq:omega_iterate}, which can be written as a discrete-time, second-order system of the form
\begin{subequations}
    \begin{align}
        \boldsymbol{\xi}_{t}^{[k]}
         & = (\mathbf{I} - (\alpha + \tau)\mathbf{L}_\xi) \boldsymbol{\xi}_{t}^{[k-1]} + \tau \mathbf{L}_\xi \boldsymbol{\xi}_{t}^{[k-2]},             \\     \boldsymbol{\omega}_{t}^{[k]}
         & = (\mathbf{I} - (\alpha + \tau)\mathbf{L}_\omega) \boldsymbol{\omega}_{t}^{[k-1]} + \tau \mathbf{L}_\omega \boldsymbol{\omega}_{t}^{[k-2]}.
    \end{align}
\end{subequations}
We further define error terms
\begin{gather}
    \mathbf{e}_{t,\xi}^{[k]} \triangleq \boldsymbol{\xi}_t^* - \boldsymbol{\xi}_t^{[k]}, \quad \mathbf{e}_{t,\omega}^{[k]} \triangleq \boldsymbol{\omega}_t^* - \boldsymbol{\omega}_t^{[k]}, \\
    \mathbf{e}_{t}^{[k]} \triangleq \begin{bmatrix}
        \mathbf{e}_{t,\xi}^{[k]\top} & \mathbf{e}_{t,\omega}^{[k]\top}
    \end{bmatrix}^\top.
\end{gather}
Since the error $\mathbf{e}_t^{[k]}$ vanishes in the consensus subspace associated with $\lambda_1(\mathbf{L})=0$, we remove this component and diagonalize the remaining dynamics to obtain the transformed error $\tilde{\mathbf{e}}_t^{[k]} \in \mathbb{R}^{(N-1)(D_\xi+D_\omega)}$, with dynamics
\begin{equation}
    \begin{bmatrix}
        \tilde{\mathbf{e}}_{t}^{[k]} \\ \tilde{\mathbf{e}}_{t}^{[k-1]}
    \end{bmatrix} = \underbrace{\begin{bmatrix}
            (\mathbf{I} - (\alpha + \tau)\bar{\mathbf{\Lambda}}) & \tau \bar{\mathbf{\Lambda}} \\
            \mathbf{I}                                           & \mathbf{0}
        \end{bmatrix}}_{\mathbf{M}}
    \begin{bmatrix}
        \tilde{\mathbf{e}}_{t}^{[k-1]} \\ \tilde{\mathbf{e}}_{t}^{[k-2]}
    \end{bmatrix}, \label{eq:error_dynamics}
\end{equation} where $\bar{\boldsymbol{\Lambda}} \triangleq \text{diag}(\lambda_2(\mathbf{L}), \dots, \lambda_N(\mathbf{L})) \otimes \mathbf{I}_{D_\xi+D_\omega}$ \cite{iqbal_communication-efficient_2026}. Then, the eigenvalues of $\mathbf{M}$ are given by the eigenvalues of $\mathbf{M}_i$ for $i = 2, \cdots, N$, where
\begin{eqnarray}
    \mathbf{M}_i &\triangleq \begin{bmatrix}
        1 - (\alpha + \tau)\mathbf{\lambda}_i(\mathbf{L}) & \tau \mathbf{\lambda}_i(\mathbf{L}) \\ 1 & 0
    \end{bmatrix}, \\
    \lambda(\mathbf{M}_i) &= \frac{1}{2} \Big(\tilde{\alpha}_i \pm \sqrt{\tilde{\alpha}_i^2+4\tau\lambda_i(\mathbf{L})} \ \Big), \label{eq:quadratic_equation}
\end{eqnarray} where $\tilde{\alpha}_i \triangleq 1-(\alpha+\tau)\lambda_i(\mathbf{L})$ \cite{iqbal_communication-efficient_2026}. Since the transformed error dynamics constitute a discrete-time linear system, convergence of the iterates is guaranteed if $\mathbb{M}$ is Schur stable i.e., if all eigenvalues of $\mathbf{M}$ lie strictly inside the unit circle. In this case, $\mathbf{M}$ is shown to be Schur stable provided that  $\alpha + 2\tau < 2(\lambda_\text{N}(\mathbf{L}))^{-1}$ and $\alpha, \tau >0$ \cite{iqbal_communication-efficient_2026}. %

Note that the values of $\alpha$ and $\tau$ may be selected to minimize the spectral radius, $\rho(\mathbf{M})$, for fast convergence. Assuming positive $\alpha, \tau >0$, the eigenvalues of $\mathbf{M}$ are real, and the spectral radius can be expressed as
\begin{equation}
    \rho(\mathbf{M}) = \max_{i,j} |\lambda_j(\mathbf{M}_i)|
    = \max_{i} \  \frac{1}{2} ( |\tilde{\alpha}_i| + \sqrt{\tilde{\alpha}_i^2 + 4\tau \lambda_i(\mathbf{L})} ).\nonumber
\end{equation}
The $4\tau \lambda_i(\mathbf{L})$ term increases the maximum eigenvalue, meaning that $\tau$ should be chosen as close to $0$ as possible to minimize the spectral radius. However, when $\tau = 0$, the ADMM iterates are identical to first-order average consensus. Therefore, we observe that for $\tau > 0$, ADMM-RGP is guaranteed to be slower than Consensus-RGP. However, for $\tau < 0$, faster convergence could be achieved while preserving complexity. The augmented Lagrangian in \eqref{eq:lagrangian1} may be written as
\begin{align}
    \mathcal{L}_\tau(\boldsymbol{\xi}, \boldsymbol{\lambda}) & = \frac{1}{2}\boldsymbol{\xi}^\top (\mathbf{I}+\tau \mathbf{L}_\xi)\boldsymbol{\xi} -(\boldsymbol{\chi}^\top+ \boldsymbol{\lambda}^\top \mathbf{B}_\xi) \boldsymbol{\xi} +\frac{1}{2}\boldsymbol{\chi}^\top\boldsymbol{\chi},
    \nonumber
\end{align} which is convex in the primal variable if $(\mathbf{I} + \tau \mathbf{L}_\xi) \succeq \mathbf{0}$. For $\tau < 0$, the smallest eigenvalue of $(\mathbf{I} + \tau \mathbf{L}_\xi)$ is $1 + \tau \lambda_N(\mathbf{L})$. Thus, the convexity of the augmented Lagrangian is preserved for $\tau > -(\lambda_N(\mathbf{L}))^{-1}$, leading to Theorem \ref{theorem:admm_stability}.

\begin{theorem} \label{theorem:admm_stability}
    Consider the error dynamics matrix $\mathbf{M}$ defined in \eqref{eq:error_dynamics} and its eigenvalues given by \eqref{eq:quadratic_equation}. If $\alpha$ and $\tau$ satisfy
    \begin{equation}
        \alpha > 0, \quad \  \frac{-1}{\lambda_N(\mathbf{L})}<\tau<0, \quad \alpha
        +2\tau < \frac{2}{\lambda_N(\mathbf{L})},
    \end{equation} then $\mathbf{M}$ is Schur stable and ADMM-RGP converges.
\end{theorem}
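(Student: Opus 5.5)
The plan is to reduce Schur stability of $\mathbf{M}$ to a scalar condition on each $2\times 2$ block $\mathbf{M}_i$, $i=2,\dots,N$, and then verify that condition with the Jury (Schur--Cohn) criterion for quadratics. I will not reuse the real-eigenvalue expression for $\rho(\mathbf{M})$ given before the theorem. For $\tau<0$ the discriminant $\tilde{\alpha}_i^2+4\tau\lambda_i(\mathbf{L})$ may be negative, so the eigenvalues in \eqref{eq:quadratic_equation} can be complex. A criterion stated directly on the characteristic polynomial handles the real and complex cases at once.

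\textbf{Step 1 (block reduction).} By \eqref{eq:error_dynamics}, $\bar{\mathbf{\Lambda}}$ is diagonal. A symmetric permutation of $\mathbf{M}$ therefore turns it into a block-diagonal matrix whose blocks are copies of $\mathbf{M}_i$, each repeated $D_\xi+D_\omega$ times. Hence $\lambda(\mathbf{M})=\bigcup_{i\ge 2}\lambda(\mathbf{M}_i)$, and it suffices to show that every $\mathbf{M}_i$ is Schur.

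\textbf{Step 2 (characteristic polynomial).} Fix $i\ge 2$ and write $\lambda\triangleq\lambda_i(\mathbf{L})$. Because $\mathcal{G}$ is connected, $0<\lambda\le\lambda_N(\mathbf{L})$. The characteristic polynomial of $\mathbf{M}_i$ is
\begin{equation*}
p(z)=z^2-\tilde{\alpha}_i z-\tau\lambda .
\end{equation*}
For a real monic quadratic $z^2+a_1z+a_0$, both roots lie strictly inside the unit circle if and only if $|a_0|<1$, $1+a_1+a_0>0$ and $1-a_1+a_0>0$. Here $a_1=-\tilde{\alpha}_i$ and $a_0=-\tau\lambda$.

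\textbf{Step 3 (verify the three inequalities).}
\begin{enumerate}
\item From $-1/\lambda_N(\mathbf{L})<\tau<0$ and $0<\lambda\le\lambda_N(\mathbf{L})$ we get $-1<\tau\lambda<0$, so $|a_0|<1$.
\item $p(1)=1-\tilde{\alpha}_i-\tau\lambda=(\alpha+\tau)\lambda-\tau\lambda=\alpha\lambda$, which is positive because $\alpha>0$.
\item $p(-1)=1+\tilde{\alpha}_i-\tau\lambda=2-(\alpha+2\tau)\lambda$. If $\alpha+2\tau\le 0$ this is at least $2$. Otherwise $(\alpha+2\tau)\lambda\le(\alpha+2\tau)\lambda_N(\mathbf{L})<2$ by the third hypothesis, so again $p(-1)>0$.
\end{enumerate}
All three conditions hold, so every $\mathbf{M}_i$ is Schur and hence $\rho(\mathbf{M})<1$.

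\textbf{Step 4 (conclude convergence).} Since $\rho(\mathbf{M})<1$, the stacked transformed error $\tilde{\mathbf{e}}_t^{[k]}$ decays geometrically to zero. The component of $\mathbf{e}_t^{[k]}$ along the consensus subspace ($\lambda_1(\mathbf{L})=0$) was already removed, as stated before \eqref{eq:error_dynamics}, so it does not contribute. Therefore $\boldsymbol{\xi}_t^{[k]}\to\boldsymbol{\xi}_t^*$ and $\boldsymbol{\omega}_t^{[k]}\to\boldsymbol{\omega}_t^*$, and ADMM-RGP converges.

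\textbf{Main obstacle.} The substantive point is recognizing that negative $\tau$ can produce complex eigenvalues. This is why the spectral-radius formula preceding the theorem is not used, and why the argument goes through the Jury conditions instead. Once that switch is made, the remaining steps are direct algebra.
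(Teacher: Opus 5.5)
Your proof is correct and follows essentially the same route as the paper. Both reduce to the $2\times 2$ blocks $\mathbf{M}_i$ and verify the same three facts: $0<-\tau\lambda_i(\mathbf{L})<1$, $p_i(1)=\alpha\lambda_i(\mathbf{L})>0$ and $p_i(-1)=2-(\alpha+2\tau)\lambda_i(\mathbf{L})>0$. The only difference is packaging: the paper splits into real and complex root cases and argues each by hand (product of roots with the sign of $p_i(\pm 1)$ for real roots, $|z|^2=\det(\mathbf{M}_i)$ for complex roots), which is exactly what the Jury criterion you cite encodes; your explicit handling of $p_i(-1)$ when $\alpha+2\tau\le 0$ is slightly more careful than the paper's.
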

\vspace{-0.5cm}
\begin{proof}
    See Appendix \ref{sec:theorem_proof}.
\end{proof}

The parameters $\alpha$ and $\tau$ may be selected using a grid search over the range of values given in Theorem \ref{theorem:admm_stability} to find the values that jointly minimize $\rho(\mathbf{M})$. To reduce communication costs, $K$ may be small enough that transient behavior dominates the error dynamics, in which case the parameters may be selected to minimize $\| \mathbf{M}^K\|_F$.

\subsection{PDMM-RGP Convergence} \label{sec:pdmm_convergence_and_params}

We begin with the observation that the objective functions of \ref{prob:info_vector_distributed} and \ref{prob:covariance} are closed, strongly convex, proper, and differentiable, so the PDMM iterations are guaranteed to converge for any $c > 0$ \cite{sherson_derivation_2019}. While $c$ does not affect asymptotic stability, note that it does influence convergence speed, a property of particular interest in communication-constrained systems where the number of $k$-iterations per time step may be limited. To analyze the convergence of PDMM-RGP, we define an auxiliary dual variable, $\boldsymbol{\zeta}_t^{[k]}\triangleq\mathbf{P}_\xi\boldsymbol{\upsilon}_t^{[k]}$, which has dynamics
\begin{subequations}
    \begin{align}
        \boldsymbol{\zeta}_t^{[k]}
         & = \mathbf{A}_\zeta\boldsymbol{\zeta}_t^{[k-1]} +2c\mathbf{P}_\xi\mathbf{C}_\xi \bar{\mathbf{C}}_c\boldsymbol{\chi}_t, \\ \bar{\mathbf{C}}_c &\triangleq (\mathbf{I}+c\mathbf{C_\xi^\top\mathbf{C}_\xi})^{-1}, \\ \mathbf{A}_\zeta &\triangleq \mathbf{P}_\xi - 2c\mathbf{P}_\xi\mathbf{C}_\xi \bar{\mathbf{C}}_c\mathbf{C}_\xi^\top.
    \end{align}
\end{subequations}
Furthermore, we define subspaces $\Psi \triangleq \text{ran}(\mathbf{C}_\xi) + \text{ran}(\mathbf{P}_\xi\mathbf{C}_\xi) \subseteq \mathbb{R}^{2MD_\xi}$ and $\Psi^\perp \triangleq \text{ker}(\mathbf{C}^\top) \cap \text{ker}((\mathbf{PC})^\top) \subseteq \mathbb{R}^{2MD_\xi}$. Let $\boldsymbol{\zeta}_{\Psi,t}^{[k]}$ and $\boldsymbol{\zeta}_{\Psi^\perp,t}^{[k]}$ denote the $\Psi$ and $\Psi^\perp$ components of $\boldsymbol{\zeta}_{t}^{[k]}$, respectively. %
According to Lemma 5.1 in \cite{li_communication_2022}, $\Psi$ and $\Psi^\perp$ are invariant over $\mathbf{P}_\xi$. In other words,
$
    \mathbf{x} \in \Psi \implies \mathbf{P}_\xi\mathbf{x} \in \Psi, \ \mathbf{y} \in \Psi^\perp \implies \mathbf{P}_\xi\mathbf{y} \in \Psi^\perp$.
Furthermore, $\Psi$ and $\Psi^\perp$ are invariant under $\mathbf{A}_\zeta$.
\begin{subequations}
    \begin{align}
        \mathbf{A}_\zeta \boldsymbol{\zeta}_{\Psi,t}^{[k]}       & = (\mathbf{P}_\xi - 2c\mathbf{P}_\xi\mathbf{C}_\xi \bar{\mathbf{C}}_c\mathbf{C}_\xi^\top)\boldsymbol{\zeta}^{[k]}_{\Psi,t} \in \Psi, \\
        \mathbf{A}_\zeta \boldsymbol{\zeta}_{\Psi^\perp,t}^{[k]} & = (\mathbf{P}_\xi - 2c\mathbf{P}_\xi\mathbf{C}_\xi \bar{\mathbf{C}}_c\mathbf{C}_\xi^\top)\boldsymbol{\zeta}^{[k]}_{\Psi^\perp,t}     \\
                                                                 & = \mathbf{P}_\xi\boldsymbol{\zeta}^{[k]}_{\Psi^\perp,t} \in \Psi^\perp.
    \end{align}
\end{subequations} Thus, the  $\boldsymbol{\zeta}_t^{[k]}$ iterations decompose into,
\begin{subequations}
    \begin{align}
        \boldsymbol{\zeta}_{\Psi,t}^{[k]}       & = \mathbf{A}_\Psi \boldsymbol{\zeta}_{\Psi,t}^{[k-1]} + \mathbf{\Pi}_\Psi(2c\mathbf{P}_\xi\mathbf{C}_\xi \bar{\mathbf{C}}_c\boldsymbol{\chi}_t),                       \\
        \boldsymbol{\zeta}_{\Psi^\perp,t}^{[k]} & = \mathbf{A}_{\Psi^\perp} \boldsymbol{\zeta}_{\Psi^\perp,t}^{[k-1]} + \mathbf{\Pi}_{\Psi^\perp}(2c\mathbf{P}_\xi\mathbf{C}_\xi \bar{\mathbf{C}}_c\boldsymbol{\chi}_t),
    \end{align}
\end{subequations}
where $\mathbf{A}_\Psi \triangleq \mathbf{\Pi}_\Psi \mathbf{A}_\zeta \mathbf{\Pi}_\Psi$ and $\mathbf{A}_{\Psi^\perp} \triangleq \mathbf{\Pi}_{\Psi^\perp} \mathbf{A}_\zeta \mathbf{\Pi}_{\Psi^\perp}$ are the restrictions of $\mathbf{A}_\zeta$ to the subspaces and $\mathbf{\Pi}_\Psi$ and $\mathbf{\Pi}_{\Psi^\perp}$ are the orthogonal projections onto $\Psi$ and $\Psi^\perp$.

\setcounter{figure}{1}
\begin{figure*}[b!]
    \setlength{\tabcolsep}{0pt}
    \centering
    \includegraphics[
        trim={1.1cm 3.5cm 2.95cm 0cm},
        clip,
        height=0.175\linewidth
    ]{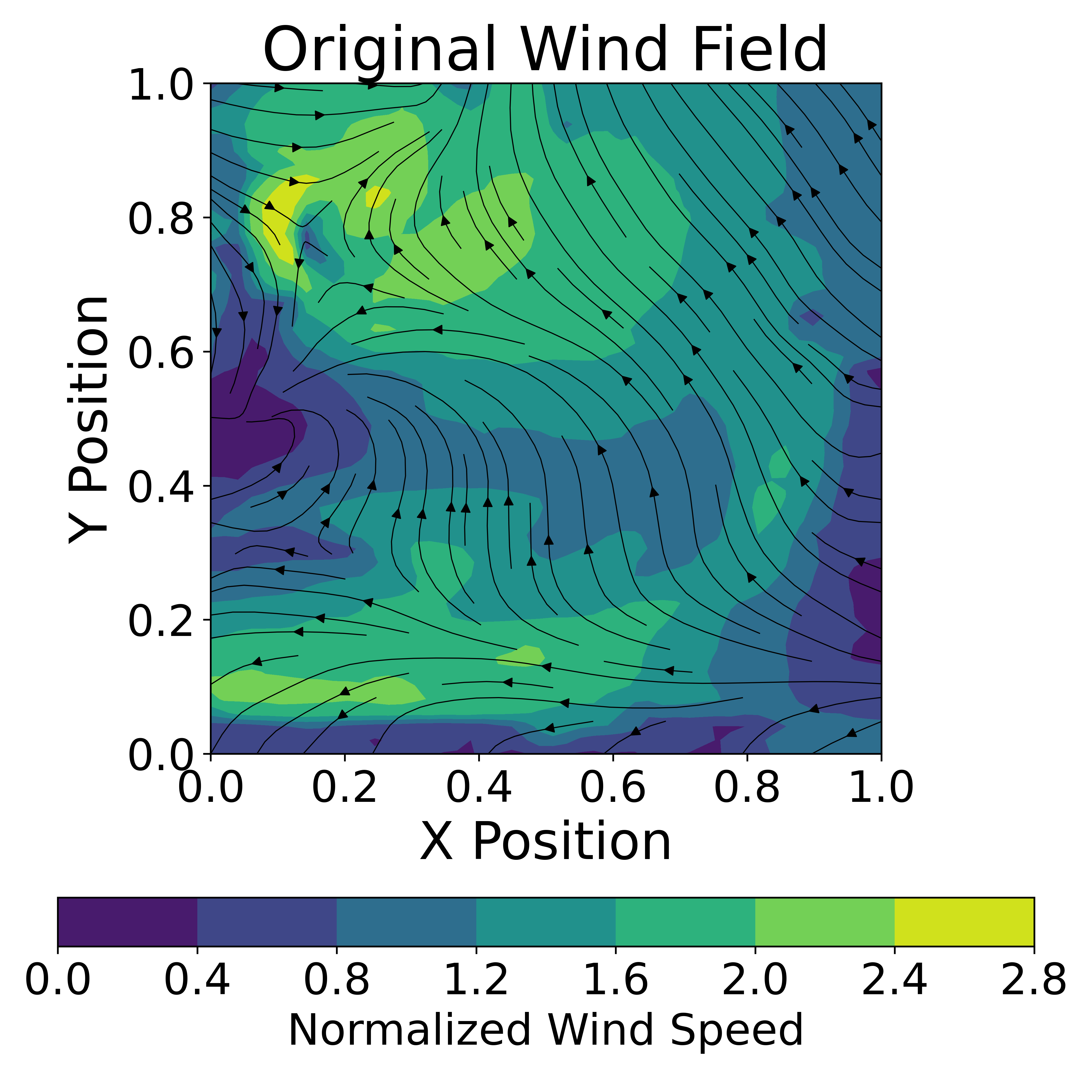}
    \includegraphics[
        trim={2cm 3.5cm 2.95cm 0cm},
        clip,
        height=0.175\linewidth
    ]{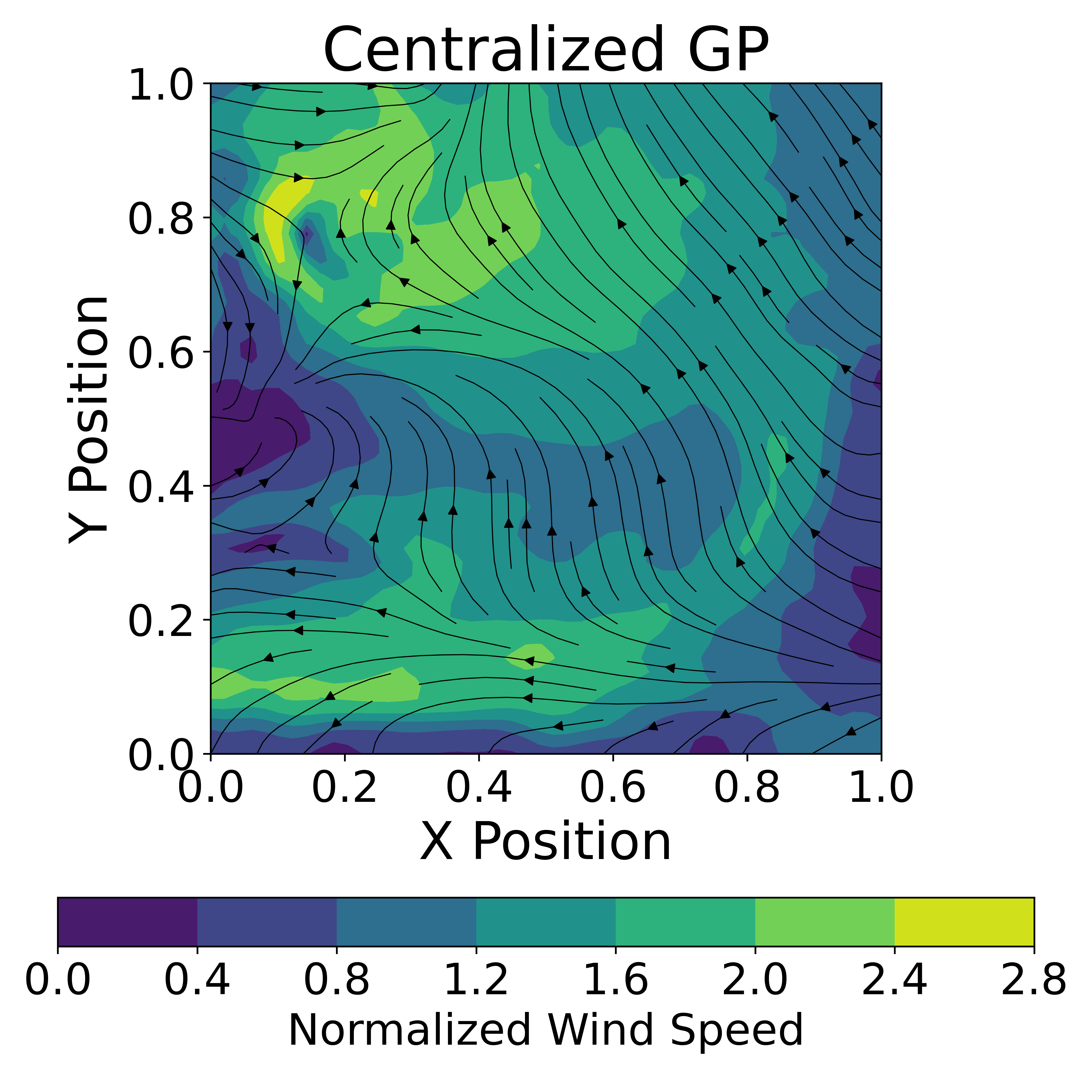}
    \includegraphics[
        trim={2cm 3.5cm 2.95cm 0cm},
        clip,
        height=0.175\linewidth
    ]{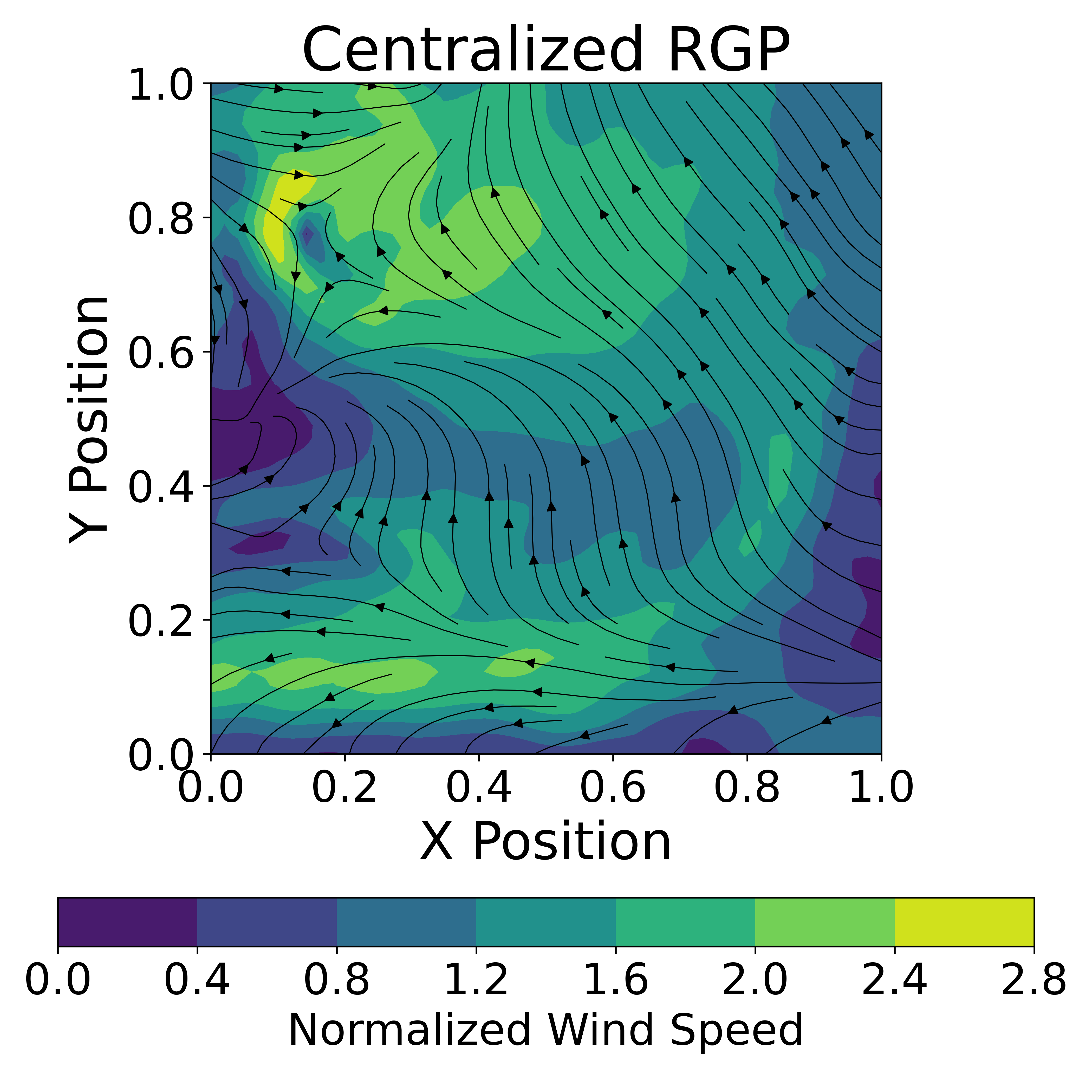}
    \includegraphics[
        trim={2cm 3.5cm 2.95cm 0cm},
        clip,
        height=0.175\linewidth
    ]{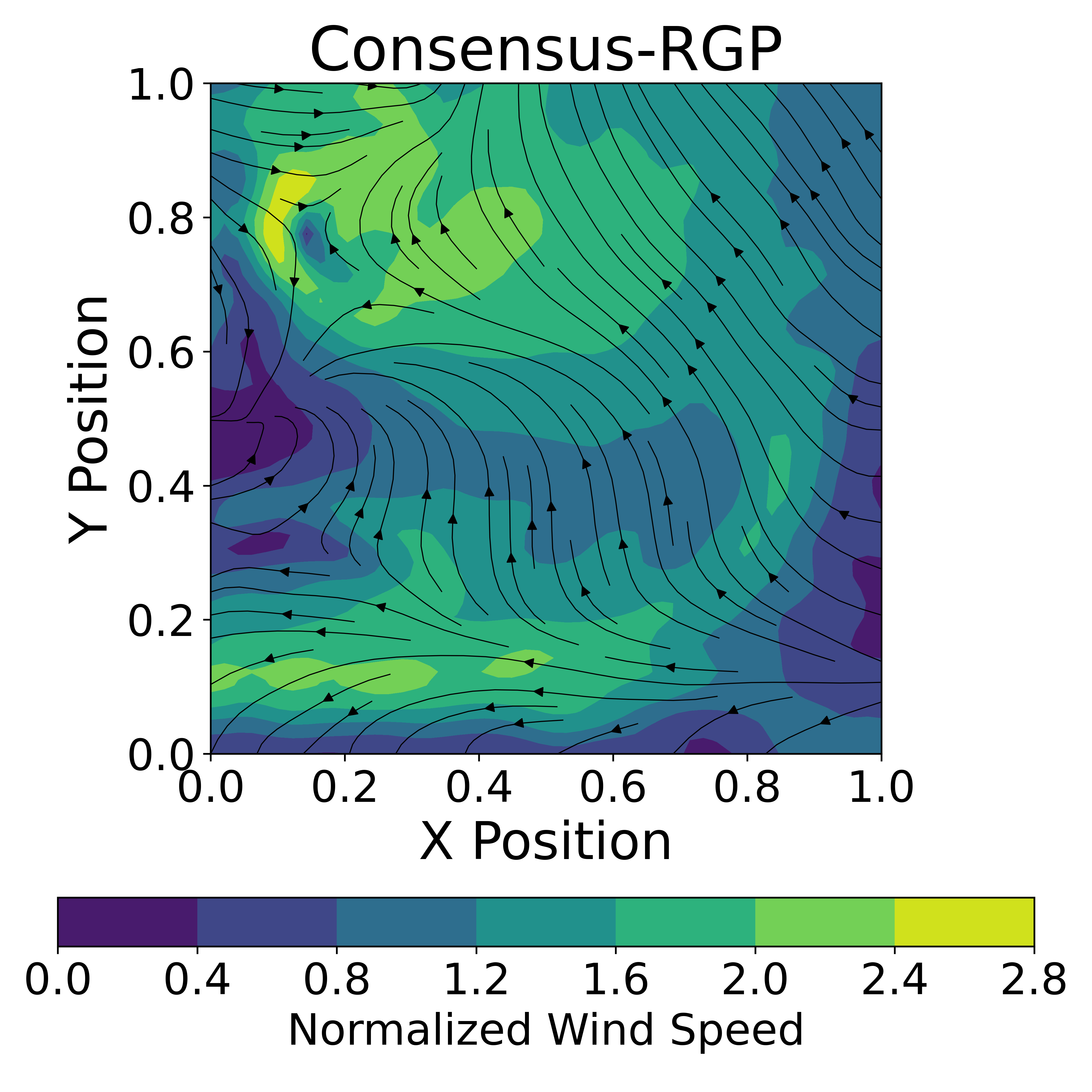}
    \includegraphics[
        trim={2cm 3.5cm 2.95cm 0cm},
        clip,
        height=0.175\linewidth
    ]{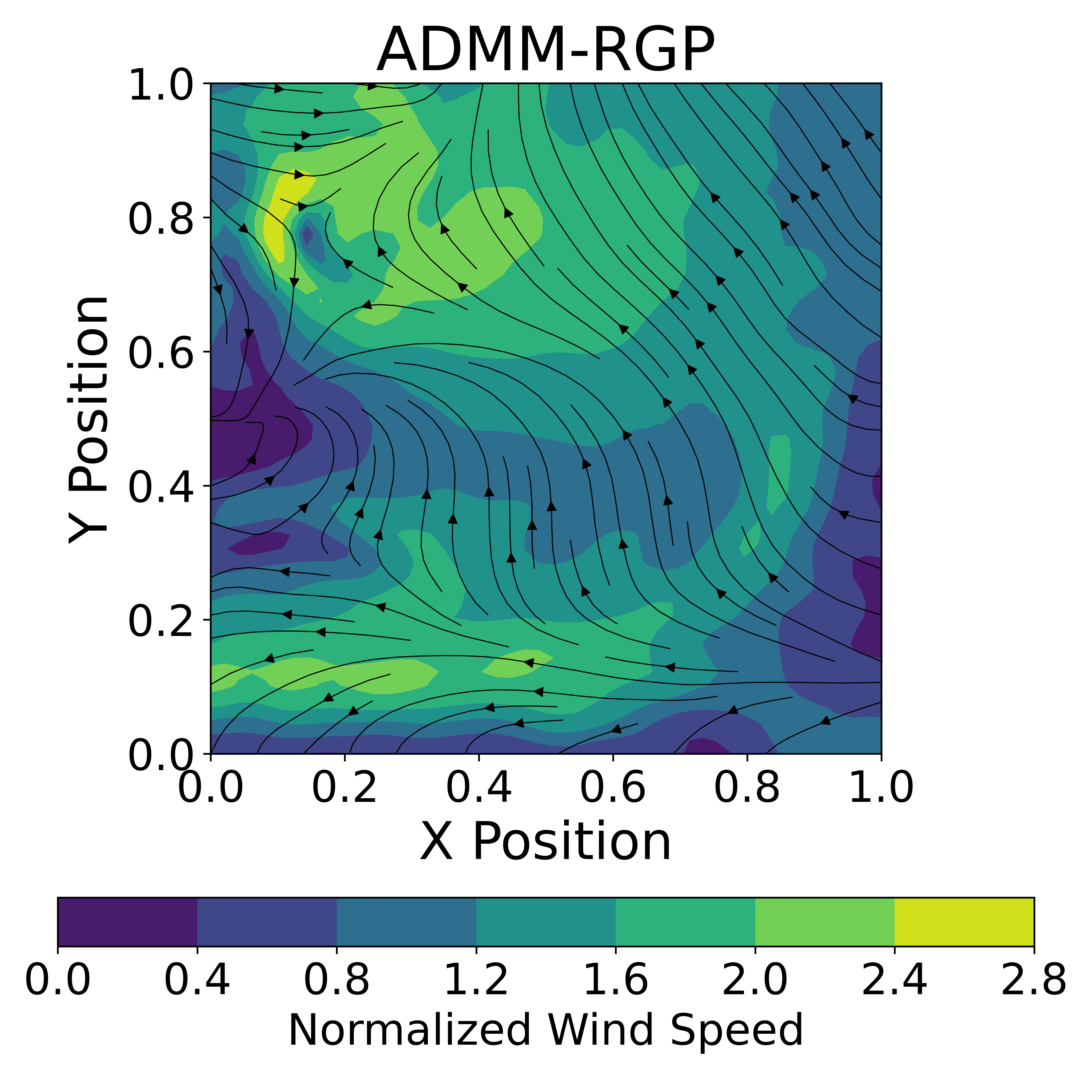}
    \includegraphics[
        trim={2cm 3.5cm 2.95cm 0cm},
        clip,
        height=0.175\linewidth
    ]{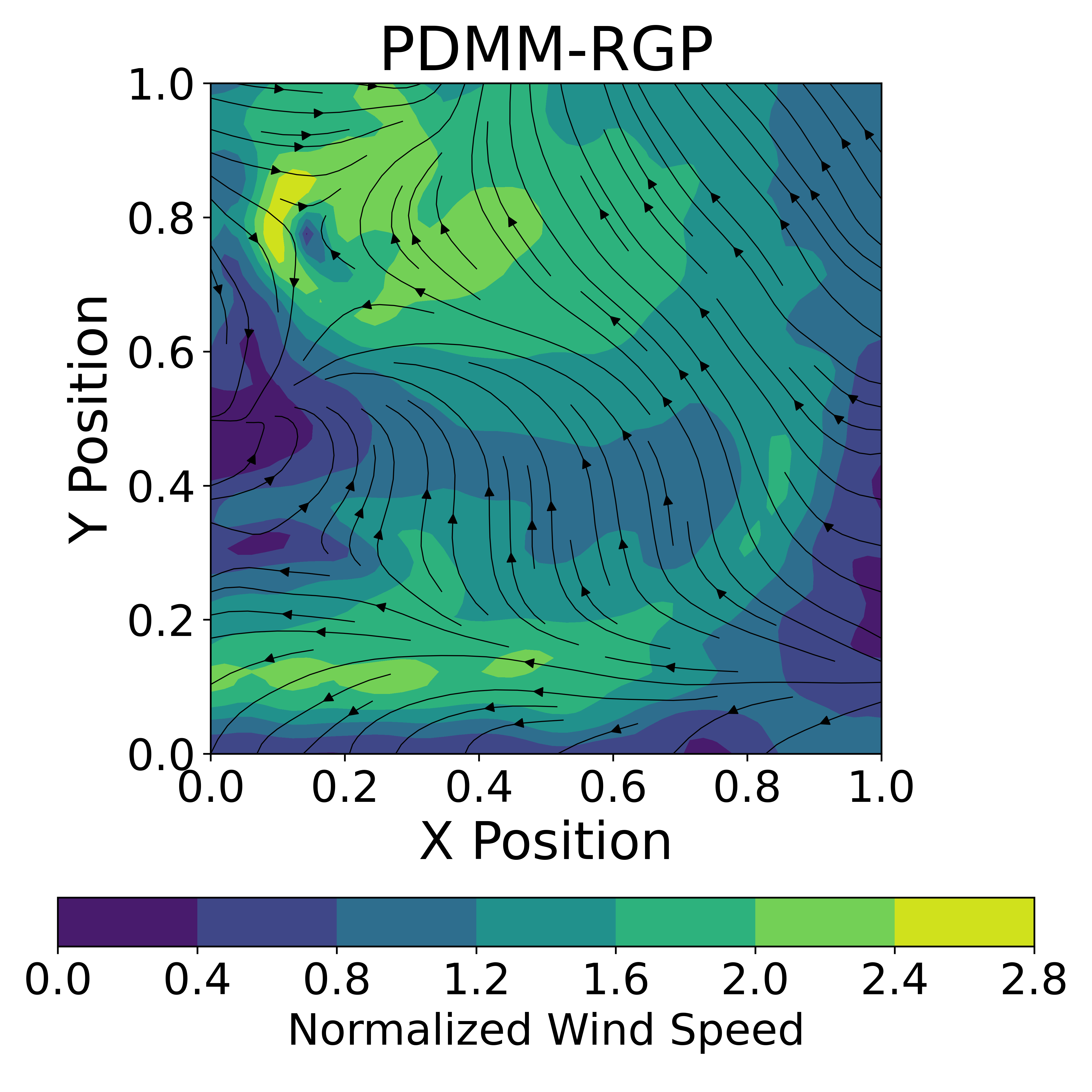} \\
    \begin{flushright}
        \includegraphics[trim={0cm 0.5cm -10cm 14.5cm}, clip, width=0.55\linewidth]{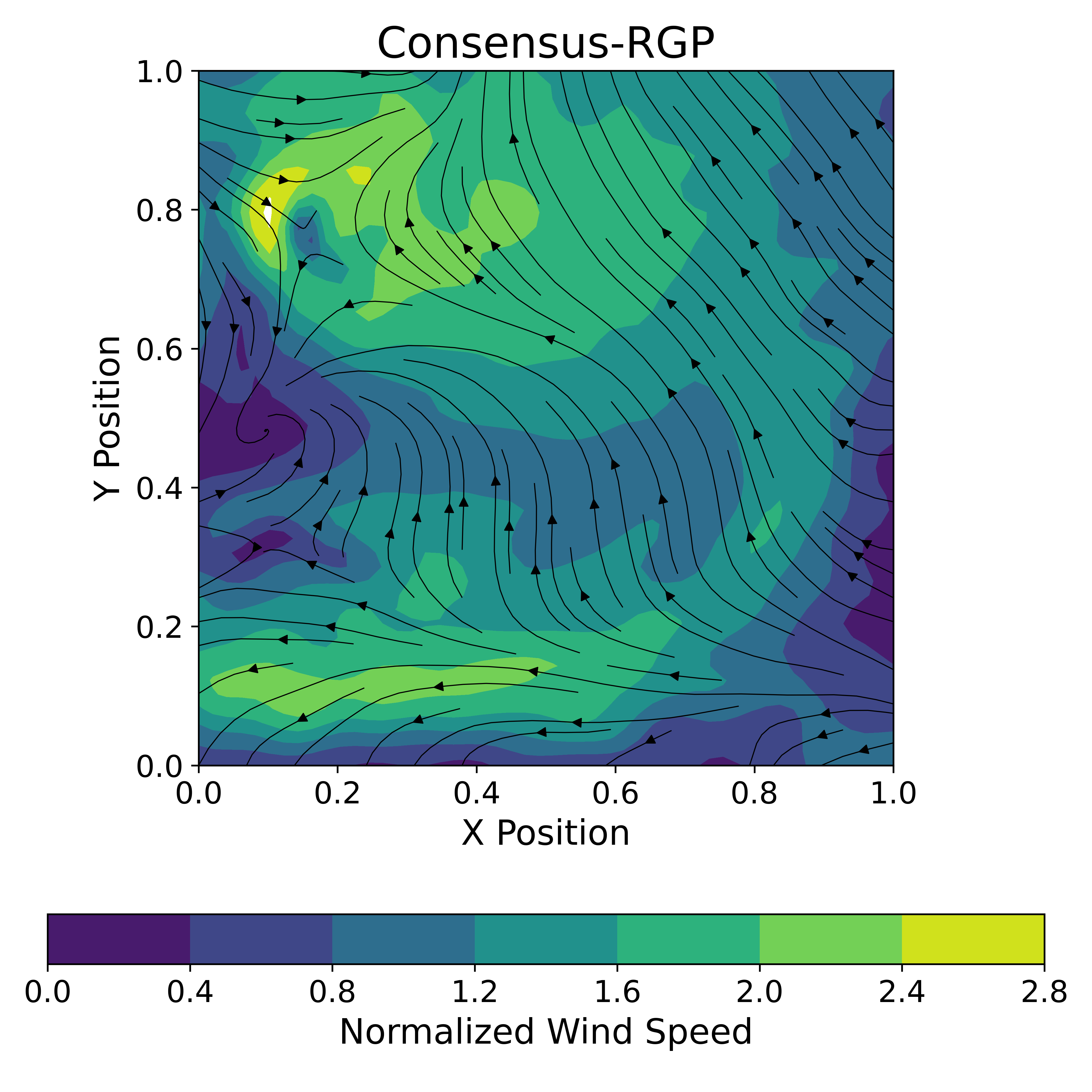}
    \end{flushright}
    \vspace{-1cm}
    \caption{Original and reconstructed wind fields.}
    \label{fig:reconstructed_fields}
\end{figure*} If $\boldsymbol{\zeta}_t^{[0]}$ is initialized randomly, it has been shown that only the component $\boldsymbol{\zeta}^{[k]}_{\Psi,t}$ converges to a fixed point $\boldsymbol{\zeta}^*_{\Psi,t}$ as $k \rightarrow \infty$ \cite{li_communication_2022}. Thus, define the error of the $\boldsymbol{\zeta}_t^{[k]}$ iterations as $\mathbf{e}_{\zeta,t}^{[k]} \triangleq \boldsymbol{\zeta}^*_{\Psi,t} - \boldsymbol{\zeta}_{\Psi,t}^{[k]}
    = \mathbf{A}_\Psi (\boldsymbol{\zeta}_{\Psi,t}^*-\boldsymbol{\zeta}_{\Psi,t}^{[k-1]})
    = \mathbf{A}_\Psi^k\mathbf{e}_{\zeta,t}^{[0]},$. Thus the $\boldsymbol{\zeta}$-iterations is governed by the spectral radius of $\mathbf{A}_\Psi$. The non-convergence of the $\Psi^\perp$ component does not affect the convergence of $\boldsymbol{\xi}_t^{[k]}$, as it is annihilated by $\mathbf{C}_\xi^\top$. Therefore, $\boldsymbol{\xi}_t^{[k]}$ will converge to a fixed point, $\boldsymbol{\xi}_t^{*}$. Now, observe the error of the $\boldsymbol{\xi}_t^{[k]}$ iterate is given by
\begin{subequations}
    \begin{align}
        \mathbf{e}_{\xi,t}^{[k]} & \triangleq \boldsymbol{\xi}_t^{*}-\boldsymbol{\xi}_t^{[k]} =-\bar{\mathbf{C}}_c\mathbf{C}_\xi^\top \boldsymbol{\zeta}_{\Psi,t}^* +\bar{\mathbf{C}}_c\mathbf{C}_\xi^\top \boldsymbol{\zeta}_{\Psi,t}^{[k-1]} \\
                                 & = -\bar{\mathbf{C}}_c\mathbf{C}_\xi^\top\mathbf{A}_\Psi^{k-1}\mathbf{e}_{\zeta,t}^{[0]}.
    \end{align}
\end{subequations} Observe that the $\boldsymbol{\xi}_t^{[k]}$ error is thus governed by the $\boldsymbol{\zeta}_{t}^{[k]}$ error, which converges asymptotically to $\mathbf{0}$ for any $c > 0$. To maximize the convergence rate, $c$ may be chosen to minimize $\rho(\mathbf{A}_\Psi)$; since this minimization is non-convex in $c$, standard convex optimization methods cannot be used, and $c$ must instead be selected via a search over candidates. While the spectral radius governs asymptotic convergence, transient behavior may still dominate for small $K$, in which case $c$ may instead be selected to minimize $\|\mathbf{C}_\xi^\top\mathbf{A}_\Psi^{K-1}\|_F$.

\setcounter{figure}{0}
\begin{figure}[t!]
    \centering
    \includegraphics[width=0.4\linewidth]{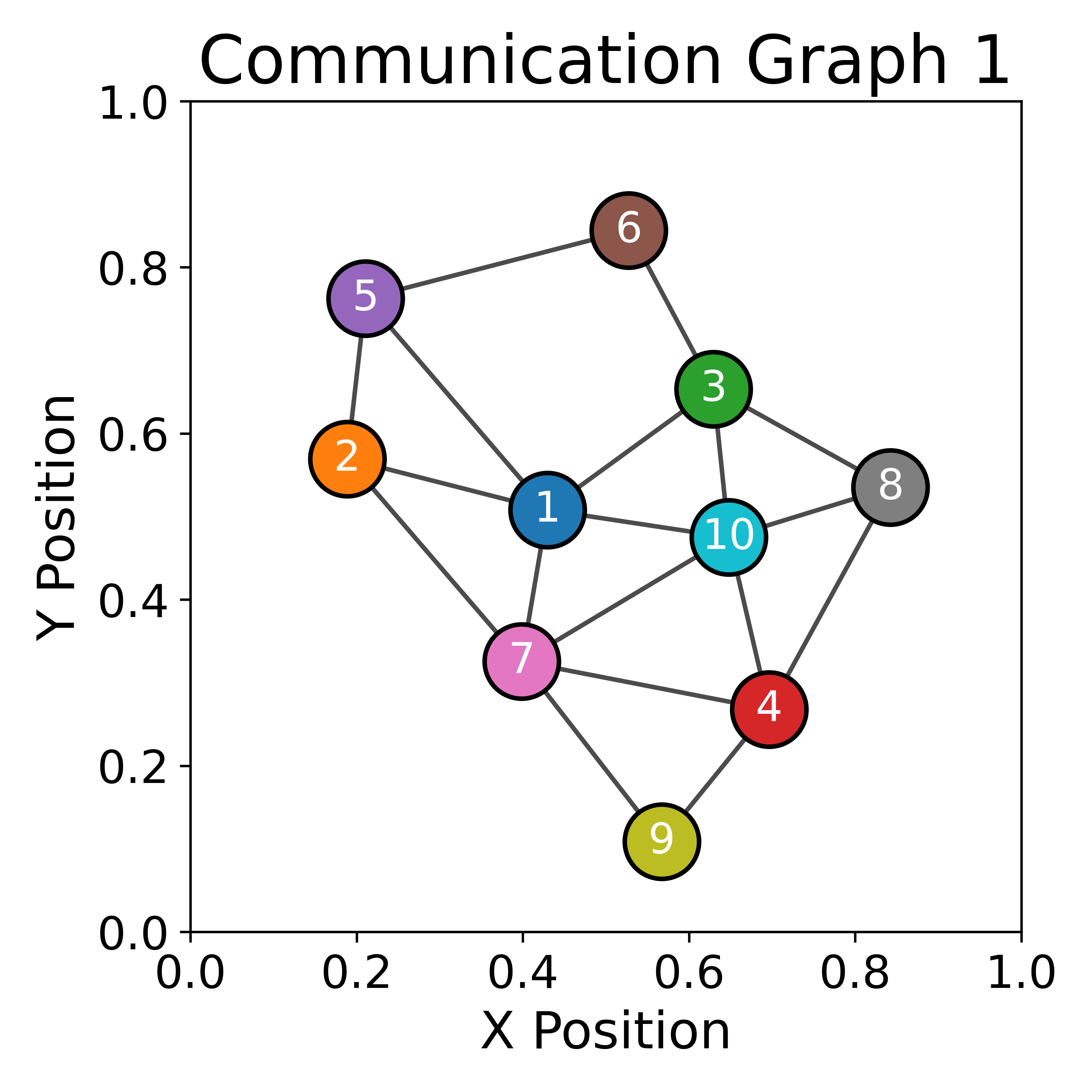}    \includegraphics[width=0.4\linewidth]{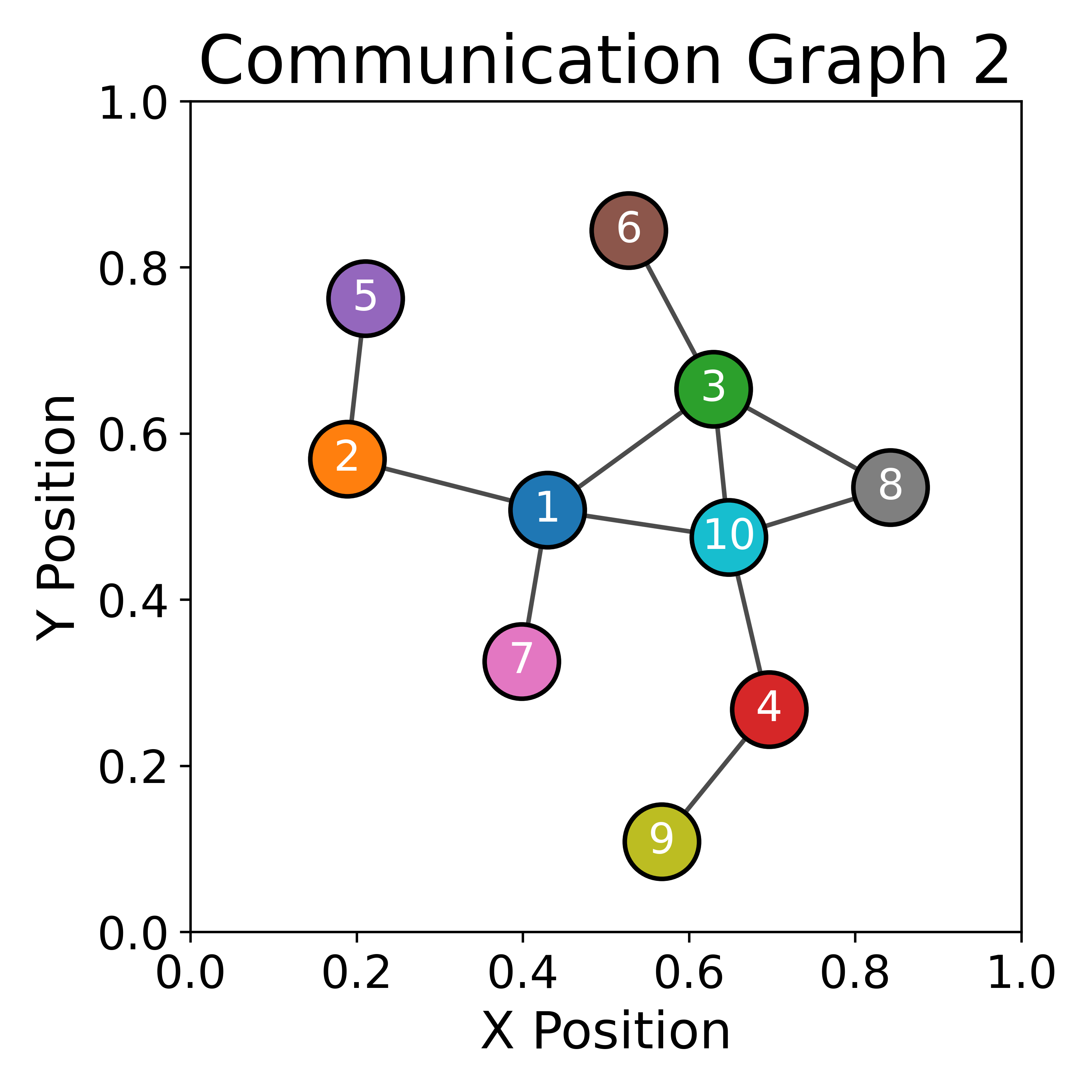}
    \caption{Communication graphs used in simulations.}
    \label{fig:comm_graphs}
    \vspace{-0.3cm}
\end{figure}

\section{Experiments} \label{sec:results}

The dataset used in the experiments consists of 10-meter wind $u$- and $v$-components indexed by GPS location, obtained from the ``ERA5 post-processed daily statistics on single levels from 1940 to present'' dataset provided by the Copernicus Climate Data Store \cite{copernicus_climate_change_service_era5_2024}.
Simulations use $N=10$ agents, whose fixed positions in the 2D wind field are shown in Figure \ref{fig:comm_graphs} with two different communication graphs. At each time step, each agent $n$ draws sample locations from $\mathcal{N}(\mathbf{p}_n,\sigma_s^2 \mathbf{I})$, where $\mathbf{p}_n$ is the agent's position and $\sigma_s = 0.25$. Given a measurement location $\mathbf{x}$, the measurement is drawn from $\mathcal{N}(\mathbf{f}(\mathbf{x}), \mathbf{R}_v)$, where $\mathbf{R}_v=\text{diag}(0.01, 0.01)$. Each agent collects $20$ measurements per time step for $T=50$ time steps, giving $10,000$ training data points. The set of 400 basis locations $\mathbf{X}_p$ form a 20-by-20 grid over the input space, and $Q=2$ latent functions are used, with the first function's kernel hyperparameters pre-trained on the $u$ wind speeds and the second trained on the $v$ wind speeds. 2,500 prediction locations $\mathbf{X}_*$, which form a 50-by-50 grid over the input space, are used to evaluate the trained model. Two Laplacian weightings are considered: the unweighted graph Laplacian and the symmetric ``optimal'' weights obtained by solving the spectral-norm minimization problem in \cite{xiao_fast_2004}. The parameters $\alpha$, $\tau$, and $c$ are tuned to optimize convergence speed for each algorithm.

\subsection{Performance Metrics}

Given the posterior covariances $\boldsymbol{\Sigma}_{n,t}= \boldsymbol{\Omega}_{n,t}^{-1}$ and means $\boldsymbol{\mu}_{n,t} = \boldsymbol{\Sigma}_{n,t} \boldsymbol{\xi}_{n,t}$ of agent $n$ at time $t$ at the basis locations, the predictive distribution at a set of test locations $\mathbf{X}_* \in \mathbb{R}^{D \times P_*}$ is $\mathcal{N}(\boldsymbol{\mu}_{*n,t}, \boldsymbol{\Sigma}_{*n,t})$, where $\boldsymbol{\mu}_{*n,t} = \mathbf{K}(\mathbf{X}_*, \mathbf{X}_p)\mathbf{K}_{\mathbf{X}_p}^{-1} \boldsymbol{\mu}_{n,t}$, and $ \mathbf{\Sigma}_{*n,t}= \mathbf{K}(\mathbf{X}_*, \mathbf{X}_*) + \mathbf{K}(\mathbf{X}_*, \mathbf{X}_p)(\mathbf{K}_{\mathbf{X}_p}^{-1} \boldsymbol{\Sigma}_{n,t} -\mathbf{I})\mathbf{K}_{\mathbf{X}_p}^{-1}\mathbf{K}(\mathbf{X}_p, \mathbf{X}_*)$. Now, letting $\mathbf{f}_* \in \mathbb{R}^{P_* D'}$ denote the true function values at the test locations, we use the root mean square error (RMSE) defined as
\begin{equation}
    \text{RMSE}(\mathbf{f}_*, \boldsymbol{\mu}_{*n,t}) \triangleq \sqrt{\dfrac{(\boldsymbol{\mu}_{*n,t}-\mathbf{f}_*)^\top (\boldsymbol{\mu}_{*n,t}-\mathbf{f}_*)}{P_*D'}},
\end{equation}
to evaluate the accuracy of the predictions.
To evaluate the level of consensus among agents, we use mean variance of predictions (MVOP) defined as
\begin{align}
    \text{MVOP}(\boldsymbol{\mu}_{*1,t}, \dots \boldsymbol{\mu}_{*N, t})\triangleq \frac{1}{P_*D'}\sum\limits_{i=1}^{P_*D'} \sigma^2_{i,t},
\end{align}
where $\sigma_{i,t}^2 \triangleq \frac{1}{N-1}\sum_{n=1}^N ([\boldsymbol{\mu}_{*n,t}]_i - \bar{\mu}_{i,t})^2$ is the unbiased sample variance of the $i^{th}$ prediction across $N$ agents at time $t$ and $\bar{\mu}_{i, t} = \frac{1}{N}\sum_{n=1}^N [\boldsymbol{\mu}_{*n,t}]_i$ is the sample mean.

\setcounter{figure}{2}
\begin{figure}[t]
    \centering
    \includegraphics[trim={0 3.05cm 0 0}, clip,width=0.99\linewidth]{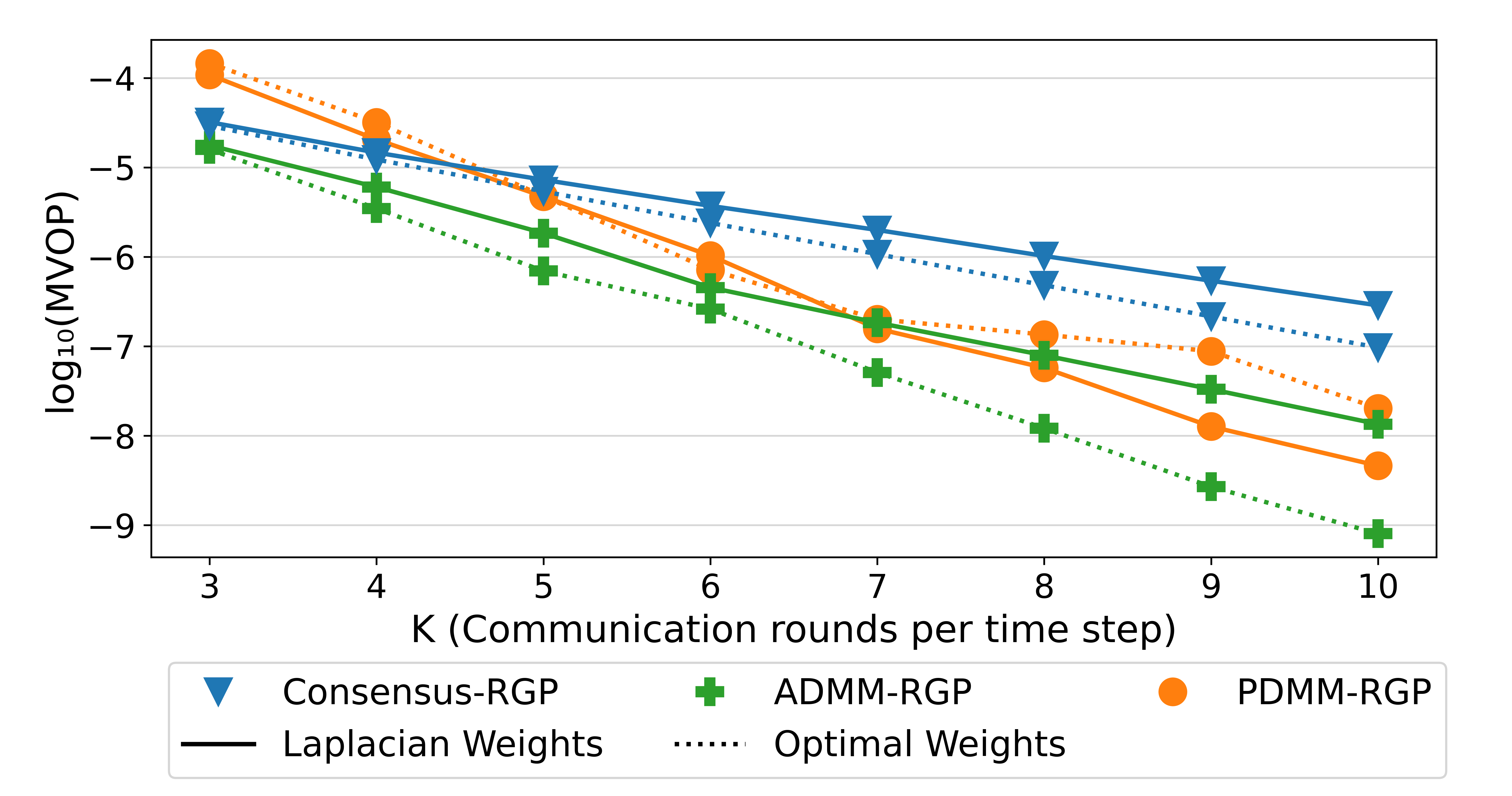}\vspace{0.1cm}
    \begin{overpic}[width=0.99\linewidth]{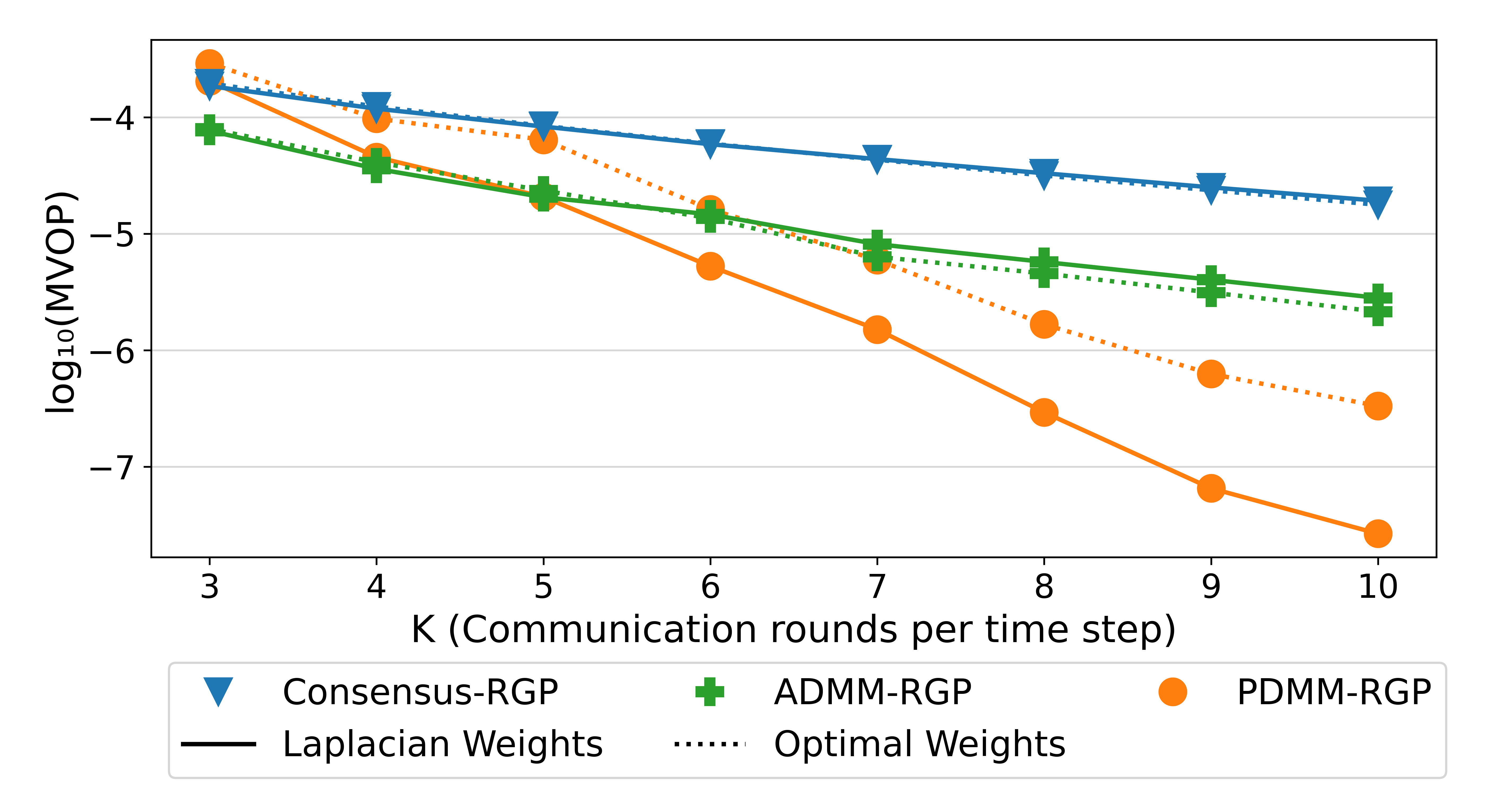}\put(37.8,6.33){\fontfamily{DejaVuSans-TLF}\fontsize{5.2pt}{7pt}\selectfont{\cite{rao_consensus-based_2026}}}
    \end{overpic}
    \caption{Plots of consensus metric MVOP vs. number of communication rounds $K$ using communication graph 1 (top) and 2 (bottom) and $T=20$ time steps.}
    \label{fig:ksweep}
    \vspace{-0.2cm}
\end{figure}

\begin{figure}[t]
    \centering
    \begin{overpic}[width=0.99\linewidth]{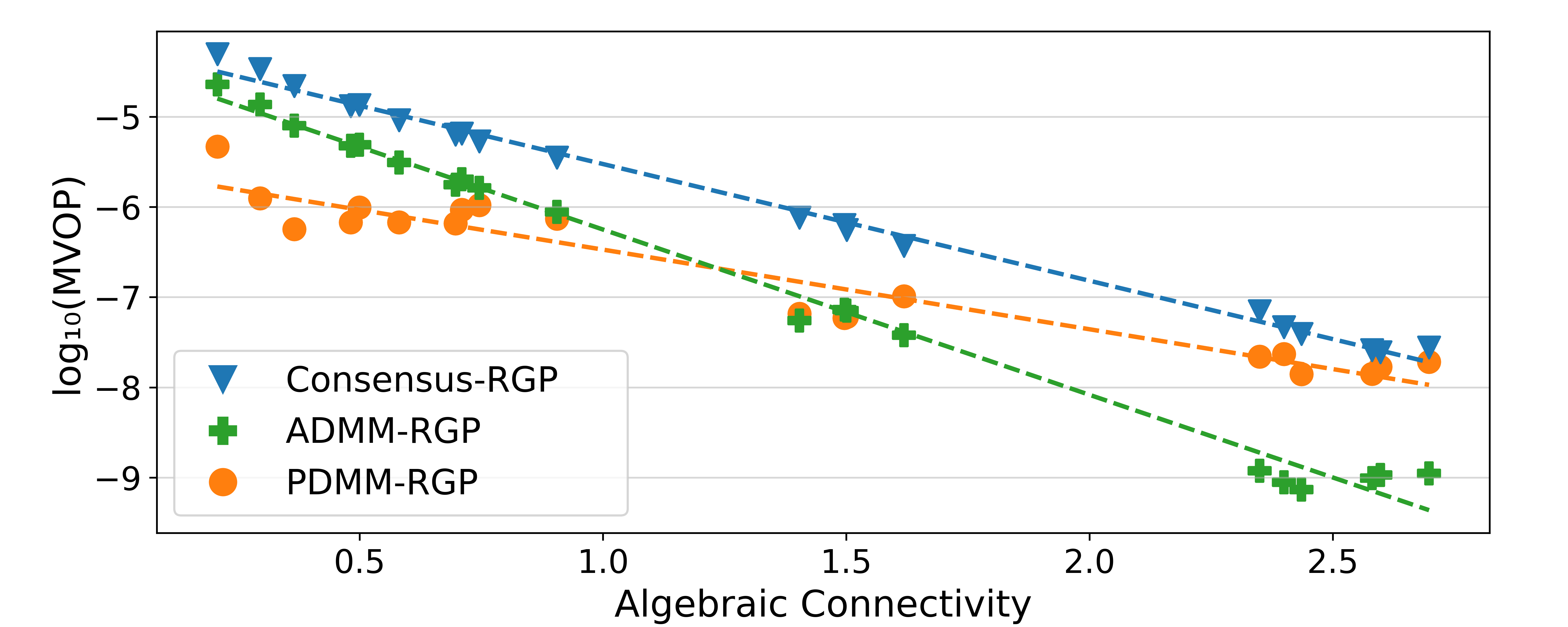}\put(35.6,15.15){\fontfamily{DejaVuSans-TLF}\fontsize{5.2pt}{7pt}\selectfont{\cite{rao_consensus-based_2026}}}
    \end{overpic}
    \caption{Plot of consensus metric MVOP vs. algebraic connectivity of the communication graph with $T=20$.}
    \label{fig:alg_conn_sweep}
    \vspace{-0.4cm}
\end{figure}

\subsection{Results and Discussion}

Consensus-RGP, ADMM-RGP, PDMM-RGP, and centralized RGP were all simulated across 100 Monte Carlo (MC) simulations with different random seeds. All four algorithms achieved an average RMSE of $0.0862$ with a 95\% confidence interval of $[0.0860, 0.0864]$. Figure \ref{fig:reconstructed_fields} shows the reconstructed fields from one simulation alongside that of a centralized, non-recursive GP. The high number of basis locations and large grid used here were chosen deliberately to focus on communication efficiency effects, which is the goal of this work. Hence, a systematic sweep over basis location count is therefore outside the scope of this study. The behavior of these methods under a reduced basis location regime — where distributed and centralized solutions are known to diverge — has been characterized in \cite{rao_consensus-based_2026}.

Figure \ref{fig:ksweep} shows the average $\log_{10}(\text{MVOP})$ versus $K$, the number of communication rounds per time step, over 100 MC runs. For communication graph 1 (top), ADMM-RGP achieves lower MVOP values than the other algorithms for any given $K$. In particular, ADMM-RGP with $K=7$ achieves a consensus level comparable to Consensus-RGP with $K=10$, reducing communication by 30\% without sacrificing performance. Although PDMM-RGP performs less favorably on graph 1, it outperforms the other algorithms on graph 2, achieving MVOP values up to three orders of magnitude lower than Consensus-RGP. On graph 2, both PDMM-RGP and ADMM-RGP achieve performance with $K=5$ comparable to Consensus-RGP with $K=10$, corresponding to a 50\% reduction in communication. Figure \ref{fig:alg_conn_sweep} plots the average $\log_{10}(\text{MVOP})$ over 100 MC runs versus the algebraic connectivity of 20 different communication graphs. PDMM-RGP outperforms the other algorithms in sparse graphs, while ADMM-RGP consistently outperforms Consensus-RGP and performs particularly well in densely connected graphs.
Figure \ref{fig:comp_time} shows box plots of computation time per communication round over 250 rounds as a function of the agent's number of neighbors. All three algorithms exhibit linear scaling, although PDMM-RGP has a steeper increase. While Consensus-RGP has the lowest per-round computational complexity, ADMM-RGP and PDMM-RGP reduce communication and can reduce total computation by requiring fewer rounds.

\begin{figure}[t]
    \centering
    \includegraphics[width=0.9\linewidth]{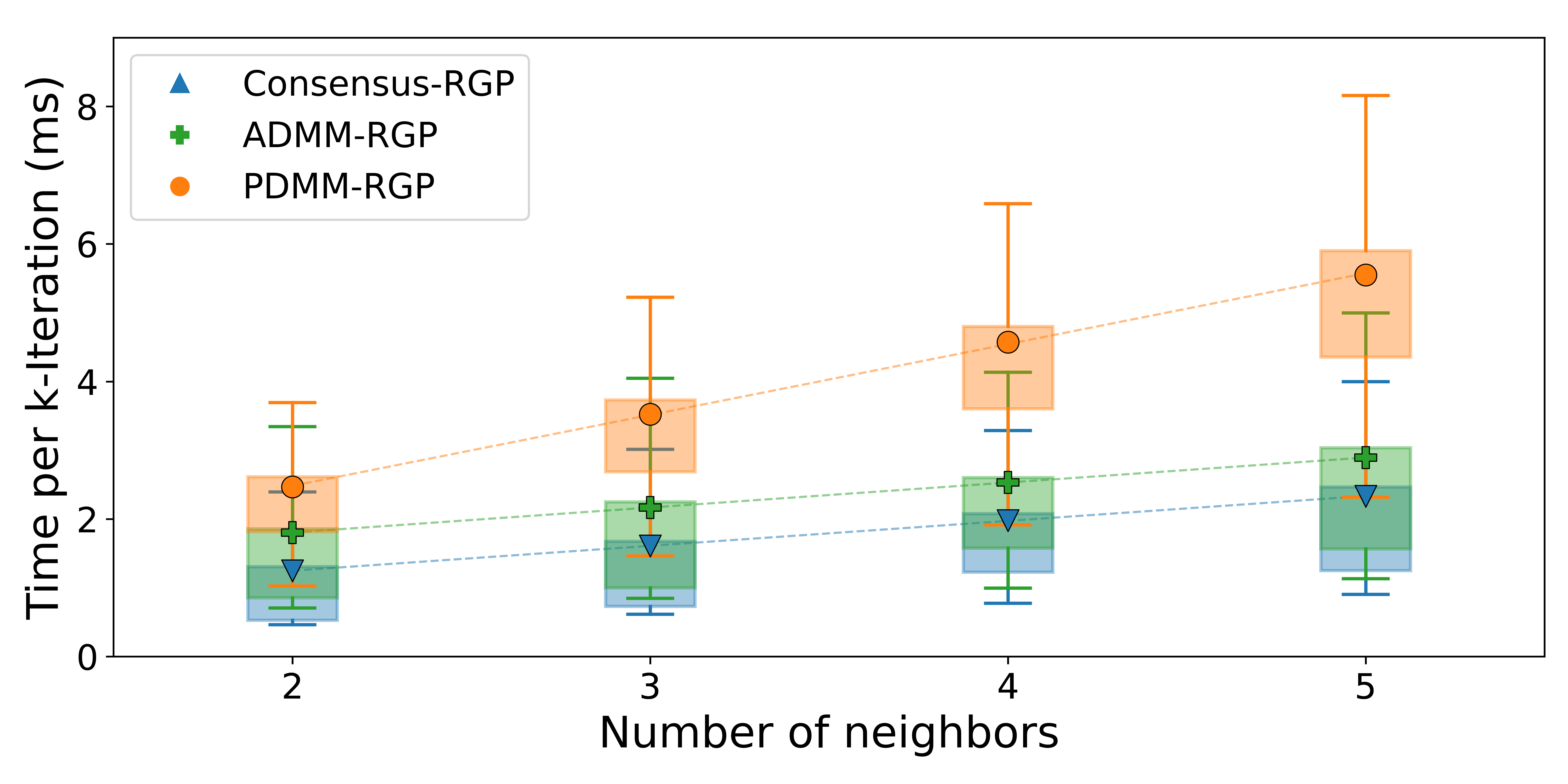}
    \caption{Box plots of computation time from 250 communication rounds using communication graph 1. Markers denote the median times.}
    \label{fig:comp_time}
    \vspace{-0.6cm}
\end{figure}

\begin{figure}

\end{figure}

\section{Conclusion} \label{sec:conclusions}
This paper presented two distributed, recursive, multi-output GP algorithms with reduced communication and computational complexity. We analyzed their convergence and provided practical guidance for parameter selection to ensure fast convergence. Experiments show the proposed algorithms require fewer resources than the state of the art while achieving comparable reconstruction performance, with PDMM-RGP more effective on sparse graphs and ADMM-RGP on denser graphs. Future work will consider asynchronous communication, time-varying functions and communication graphs, and extensions to other GP representations such as random-feature and multiple-model formulations. The proposed fusion framework may also be applied to distributed trajectory and dynamical state estimation, as well as active sensing and cooperative control problems.

\appendices

\section{Proof of Theorem \ref{theorem:admm_stability}} \nonumber \label{sec:theorem_proof}

Using the property $\text{det}(\mathbf{A}) = \prod_i\lambda_i(\mathbf{A})$ \cite{petersen_matrix_2012}, we have
$
\det(\mathbf{M}_i)=-\tau\lambda_i(\mathbf{L}) = \lambda_1(\mathbf{M}_i)\lambda_2(\mathbf{M}_i)$.
Because $\lambda_i(\mathbf{L})$ is positive, $0< -\tau\lambda_i(\mathbf{L}) < 1$. If the eigenvalues of $\mathbf{M}_i$ are real, the characteristic equation,
\begin{equation*}
p_i(\lambda) = \lambda^2 - (1-(\alpha+\tau)\lambda_i(\mathbf{L})) \lambda - \tau \lambda_i(\mathbf L)=0,
\end{equation*}
has two real roots, which satisfy $|\lambda_1\lambda_2| = |-\tau\lambda_i(\mathbf{L})| <1$. Therefore, both roots have magnitude $<1$ provided that neither one crosses the $\pm 1$ boundary. Evaluating the characteristic equation at $1$ and $-1$, we find
\begin{equation*}
p_i(1)=\alpha \lambda_i(\mathbf L) > 0,
\qquad
p_i(-1)=2-(\alpha+2\tau)\lambda_i(\mathbf L)>0,
\end{equation*}
under the assumed bounds on $\tau$. Therefore, both eigenvalues lie strictly within $(-1,1)$. In the case that the eigenvalues are complex conjugate pairs of the form $a\pm bj$, we have $
\det(\mathbf{M}_i)=(a + bj)(a-bj)= a^2 +b^2
$ with squared magnitude $
    |\lambda_1(\mathbf{M}_i)|^2 = |\lambda_2(\mathbf{M}_i)|^2 = a^2+b^2 
$.
Therefore, if the eigenvalues of $\mathbf{M}_i$ are complex, they have squared magnitudes equal to $\det(\mathbf{M}_i) = -\tau\lambda_i(\mathbf{L})$, which we have established is in the range $(0,1)$. Thus, we have shown that the eigenvalues of $\mathbf{M}_i$ lie strictly inside of the unit circle and thus $\mathbf{M}$ is Schur stable.

\ifCLASSOPTIONcaptionsoff
  \newpage
\fi

\newpage
\bibliographystyle{IEEEtran}
\bibliography{references}

\vfill\pagebreak
\end{document}